\newcommand{\haochuan}[1]{}
\newcommand{\norm}[1]{\left\|#1\right\|}
\newcommand{\inner}[1]{\langle #1 \rangle}
\newcommand{\abs}[1]{\left| #1 \right|}
\newcommand{\epiLen}{L}
\newcommand{\numEpi}{K}
\newcommand{\mnoise}{H}
\newcommand{\mbd}{B}
\newcommand{\vnoise}{h}
\newcommand{\vbd}{B}
\newcommand{\thetabd}{\sqrt{d}}
\newcommand{\lse}{\text{lse}}
\newcommand{\sigmax}{\sigma}
\DeclareMathOperator*{\argmax}{argmax}
\DeclareMathOperator*{\argmin}{argmin}
\DeclareMathOperator*{\tr}{Trace}
\DeclareMathOperator{\GM}{GM}
\newtheorem{lemma}{Lemma}
\newtheorem{assumption}{Assumption}
\newtheorem{proposition}[lemma]{Proposition}
\newtheorem{theorem}[lemma]{Theorem}
\newtheorem{remark}{Remark}
\newtheorem{definition}{Definition}
\newcommand{\mR}{\hat{R}}
\newcommand{\eR}{\tilde{R}}
\newcommand{\R}{\mathbb{R}}
\newcommand{\N}{\mathbb{N}}
\newcommand{\E}{\mathbb{E}}
\newcommand{\bR}{\mathbb{R}}
\DeclarePairedDelimiter{\ceil}{\lceil}{\rceil}
\DeclarePairedDelimiter{\floor}{\lfloor}{\rfloor}
\newcommand{\cA}{\mathcal{A}}
\newcommand{\cD}{\mathcal{D}}
\newcommand{\cG}{\mathcal{G}}
\newcommand{\cN}{\mathcal{N}}
\newcommand{\cO}{\mathcal{O}}
\newcommand{\cP}{\mathcal{P}}
\newcommand{\cT}{\mathcal{T}}
\newcommand{\cZ}{\mathcal{Z}}
\newcommand{\var}{\operatorname{Var}}
\newcommand*{\sheafhom}{\mathcal{H}\kern -.5pt \it{ om}}
\newcommand*{\sheafspec}{\mathcal{S}\kern -.5pt \it{ pec}}
\newcommand*{\sheafproj}{{\mathcal{P}\kern -.5pt \it{ roj}}}
\newcommand{\epsilonGM}{\epsilon}
\newcommand{\linucb}{\texttt{LinUCB}\xspace}
\newcommand{\byucb}{\texttt{Byzantine-UCB}\xspace}
\newcommand{\byucbdp}{\texttt{Byzantine-UCB-DP}\xspace}
\newcommand{\byucbdpm}{\texttt{Byzantine-UCB-DP-MoM}\xspace}
\newcommand{\Agg}{\texttt{Aggregate}\xspace}
\newcommand{\Pri}{\texttt{Privatize}\xspace}
\newcommand{\dpepsilon}{\mu}
\newcommand{\dpdelta}{\nu}
\newcommand{\epiGram}{U}
\newcommand{\epiFeatSum}{u}
\newcommand{\Gram}{V}
\newcommand{\featSum}{v}
\newcommand{\groupnumber}{P}
\newcommand{\group}{\cP}
\title{Byzantine-Robust Federated Linear Bandits}
\begin{document}

\author{\name Ali Jadbabaie\footnotemark[1] \email{jadbabai@mit.edu} \\
\name Haochuan Li\footnotemark[1] \email{haochuan@mit.edu} \\
\name Jian Qian\footnotemark[1] \email{jianqian@mit.edu} \\
\name Yi Tian\footnotemark[1] \email{yitian@mit.edu} \\
\addr{Massachusetts Institute of Technology}
}

\renewcommand*{\thefootnote}{\fnsymbol{footnote}}
\footnotetext[1]{Supported in parts by ONR grants N00014-20-1-2336 and N00014-20-1-2394, an MIT-IBM Watson grant, and NSF BIGDATA grant 1741341.}
\renewcommand*{\thefootnote}{\arabic{footnote}}

\maketitle

\begin{abstract}
  In this paper, we study a linear bandit optimization problem in a federated setting where a large collection of distributed agents collaboratively learn a common linear bandit model. Standard federated learning algorithms applied to this setting are vulnerable to Byzantine attacks on even a small fraction of agents. We propose a novel algorithm with a robust aggregation oracle that utilizes the geometric median. We prove that our proposed algorithm is robust to Byzantine attacks on fewer than half of agents and achieves a sublinear $\tilde{\mathcal{O}}({T^{3/4}})$ regret with $\mathcal{O}(\sqrt{T})$ steps of communication in $T$ steps. Moreover, we make our algorithm differentially private via a tree-based mechanism. Finally, if the level of corruption is known to be small, we show that using the geometric median of mean oracle for robust aggregation further improves the regret bound.
\end{abstract}

\section{Introduction}

Recommendation systems have been a workhorse of e-commerce~\citep{Schafer1999RecommenderSI} and operations management applications~\citep{10.1007/978-3-319-07863-2_55} for more than a decade. The explosion of interest in personalized recommendation systems, however, has raised critical ethics and privacy issues. These trends, together with recent advances in federated and distributed computation, have given rise to new challenges and opportunities for the design of new recommendation systems that are developed using a secure, private, and federated architecture.

A key ingredient of such a system, at its core would be a bandit optimization engine. To this end, the current paper is motivated by the consideration of data corruption in a federated recommendation system. The recommendation system is modeled by a linear bandit with time-varying decision sets~\citep{abbasi2011improved}. The data corruption is modeled by the Byzantine attack~\citep{lamport1982byzantine}, a famous error model in distributed systems where parts of the system fail and there is imperfect information about the occurrences of the failures. How does one design provably robust algorithms in such a scenario?

More specifically, consider the scenario where one has to make recommendations to many devices. It is natural to assume that a device is continually used by the same user and that the users at different devices share similarity (e.g., from the same user group) so that at time step $t$, the decision sets $\cD_{i}^t$ for the devices are drawn i.i.d. from a distribution where $i$ denotes a device index. The distribution is unknown and can change over time, modeling the fact that the preferences of the user group may be influenced by certain events as time goes by. We take the linear bandit model~\citep{abbasi2011improved}, according to which after making recommendation $x_{i}^t \in \cD_{i}^t$, the reward we receive satisfies $\E[r_{i}^t \vert x_{i}^t] = x_{i}^t \cdot \theta^{\ast}$. Such a model is a special case of federated linear bandits~\citep{dubey2020differentially}. 

We consider a centralized federated learning setup~\citep{kairouz2019advances}, where devices are distributed and communicate with a central controller. Either due to noncooperative user behaviors or due to hijacking of the device by some adversary, the communications from some devices to the controller may be corrupted. Hence, it is vital that the federated recommendation system is robust to such corruptions.  
Here we consider a rather general and classical data corruption scheme called the Byzantine attack~\citep{lamport1982byzantine}, where the corrupted information is arbitrary and we have no knowledge about whether the corruption happens at a particular device.
Such a scenario has been considered in federated optimization~\citep{pillutla2019robust,wu2020federated}, where the performance of an algorithm is measured by the convergence rate. However, it is unclear how federated recommendation systems can be made robust to such attacks, where the performance of an algorithm is measured by the notion of regret. 

An immediate question is how we should define regret in such a scenario. Since corrupted devices may fail arbitrarily under the Byzantine attack, a reasonable way is to consider the regret defined on the uncorrupted devices, which we call \emph{robust regret}. A robust algorithm is then one that achieves sublinear robust regret. Since the controller has no information about which device has failed, it is challenging to design algorithms robust to the Byzantine attack. 

In this paper, we design an algorithm that is robust to such attacks under the above federated linear bandit model. Notably, we show that so long as more than half of the devices are consistently reliable, our algorithm, called \byucb, achieves $\tilde{\cO}(dNT^{3/4})$ robust regret for $N$ federated linear bandits of dimension $d$ in $T$ steps with $\cO(\sqrt{T})$ steps of communication. 
Essential to achieve robustness is the i.i.d.~assumption on the decision sets for different devices, since under such an assumption, reliable information can be obtained via robust estimation; specifically, by using the (geometric) median estimator in place of the mean estimator~\citep{minsker2015geometric}.

Although it is well-known that geometric median provably robustify the convergence of federated optimization, things are very different for the federated bandit problem. Unlike optimization where the geometric median is used to robustly estimate the mean of gradients, the linear bandit problem does not involve gradients. Instead, the challenge of the bandit problem is the well-known exploration-exploitation dilemma, where the agent attempts to acquire new knowledge (called "exploration") or to optimize its decisions based on existing knowledge (called "exploitation"). It becomes more challenging in a federated setting with Byzantine attacks. To our knowledge, we are the first to tackle this challenge.\haochuan{Added  hardness here}

Privacy preservation is a major concern in federated learning~\citep{Yin2021ACS} and one of the key contributions of this paper: since the users do not want other users to learn any of their personal information from the broadcast messages. \citet{dubey2020differentially} consider differentially private federated linear bandits, where differential privacy is defined for decision sets and rewards. Here with the Byzantine attack, messages that contain information in several steps can be manipulated. Therefore, it is more meaningful to consider a more general notion of differential privacy that is defined for the communication messages. Equipped with the tree-based mechanism~\citep{dwork2010differential,chan2011private}, our new algorithm (called \byucbdp) simultaneously achieves differential privacy for communication and a slightly worse $\tilde{\cO}(d^{3/2} N T^{3/4})$ robust regret.

Both \byucb and \byucbdp have the advantage of being agnostic to the proportion of devices that are corrupted. If the corruption proportion $\alpha$ is small and its upper bound is known, another robust estimation can be obtained by the median of mean approach~\citep{Darzentas1984}. \byucbdpm, with a different aggregation oracle based on median of mean, interpolates the robust regret between $\tilde{\cO}(d^{3/2} N T^{1/2})$ and $\tilde{\cO}(d^{3/2} N T^{3/4})$ for $0\le \alpha \le 1/2$ under the differential privacy constraint.

\paragraph{Summary of our contributions.}
In this section, we summarize the key contributions of the paper.  The first contribution is in problem statement and modeling: we introduce the problem of federated linear bandits under the Byzantine attack. To justify the necessity of federated learning, we show that the robust regret can be linear in the number of time steps $T$ for any algorithm without communication (Proposition~\ref{prop:lower_bound}). Furthermore, we present a federated learning algorithm called \byucb and two variants (\byucbdp and \byucbdpm) that have the following properties: 
\begin{itemize}[leftmargin=1.5em]    \setlength{\itemsep}{-1pt}
    \item \byucb achieves a sublinear $\tilde{\cO}(T^{3/4})$ robust regret in $T$ steps with $\cO(\sqrt{T})$ steps of communication (Theorem~\ref{thm:regret}).
    
    \item \byucbdp simultaneously guarantees differential privacy with a slightly worse robust regret and the same communication cost (Theorem~\ref{thm:regret_dp}).
    
    \item If knowledge about the proportion of corrupted devices is available, \byucbdpm, apart from the differential privacy guarantee, interpolates the robust regret between $\tilde{\cO}(\sqrt{T})$ and $\tilde{\cO}(T^{3/4})$ depending on the corruption proportion (Theorem~\ref{thm:regret_mom}).
    
\end{itemize}

\subsection{Related work}

\paragraph{Linear bandits.} 
\citet{auer2002using} introduced the first finite-time regret analysis of linear bandit under the name "linear reinforcement learning". The setting is then extensively studied \citep{abe2003reinforcement, dani2008stochastic,  abbasi2011improved}. Notably, \linucb by \citet{abbasi2011improved} forms the basis of our analysis. Moreover, this setting found its application in recommender systems \citep{li2010contextual, chu2011contextual}.

\paragraph{Federated learning.} Federated learning is a machine learning technique that trains an algorithm across multiple decentralized edge devices or servers holding local data samples, without exchanging them \citep{kairouz2019advances}. Progress has been made in the federated learning setting in distributed supervised learning \citep{konevcny2016federatedB} and federated optimization \citep{konevcny2016federated,Jadbabaie2022FederatedOO,Reisizadeh2020FedPAQAC,Reisizadeh2020RobustFL}. 
Many recent works study different aspects of the bandit problem or the more general reinforcement learning problem in a federated setting~\citep{dubey2020differentially,Dubey2020PrivateAB,Li2022CommunicationEF,Huang2021FederatedLC,Shi2021FederatedMB,Shi2021FederatedMBPers,Zhu2021FederatedB,Tao2021OptimalRO,Fan2021FaultTolerantFR}.

\paragraph{Differential privacy.} Privacy issues are also important for distributed systems. \citet{dwork2008differential,dwork2014algorithmic} introduced a cryptographically-secure privacy framework that characterized the privacy issue of an algorithm as the change in the output with a slight change in the input. Moreover, the tree-based algorithm is proposed by \citet{dwork2010differential, chan2011private} to realize the privacy requirement for partial sums. It is then applied to contextual bandits by \citet{shariff2018differentially,dubey2020differentially}.

\paragraph{Byzantine-robustness.} Byzantine attack is a type of attack that causes parts of a distributed system to fail while unknown to the other parts~\citep{lamport1982byzantine}. More specifically, an attacked part may behave completely arbitrarily and can send any message to other parts. Federated learning algorithms, working in a distributed manner, might suffer from such issues too. The key to resolving these issues is robust estimation which is pioneered by \citet{huber1992robust, huber2004robust}. \citet{Darzentas1984} first introduced the median of mean approaches. Recent works~\citep{Hsu2016a,Lecue2020,Lugosi2019,Lugosi2020, minsker2015geometric, pillutla2019robust} abound in the field of robust estimation.  Moreover, many such approaches are applied to the distributed optimization tasks against Byzantine attacks, where the target is mainly to improve stochastic gradient descent solver of the underlying optimization task, e.g. through aggregating by geometric median~\citep{chen2017distributed, wu2020federated}, median \citep{xie2018generalized}, trimmed median~\citep{yin2018byzantine}, iterative filtering~\citep{su2018securing}, Krum~\citep{Blanchard2017} and RSA~\citep{Li2019} etc. Of all these methods, we focus on geometric median and geometric median of mean for aggregation. Also, $\epsilon$-approximation of the two values are considered for tractability~\citep{pillutla2019robust}.

Two recent papers~\citep{Dubey2020PrivateAB,Fan2021FaultTolerantFR} also study fault-tolerant federated bandit or reinforcement learning problems. However, we focus on very different settings and aspects. First, \citet{Dubey2020PrivateAB} studies the multi-armed bandit problem which is more specific and easier than the linear bandit problem we consider. Moreover, they use a different corruption model where corrupted data are assumed to follow a fixed but unknown distribution while we consider arbitrary attacks. \citet{Fan2021FaultTolerantFR} studies the more general federated reinforcement learning problem. However, they are considering convergence to stationary points which is much weaker than the regret bounds considered in this paper.
\haochuan{added some related works}

\section{Preliminaries}

\label{sec:problem_setup}

\paragraph{Notation.} For any integer $n\in\N$, let $[n]$ be the set $\{1,\ldots,n\}$. For a vector $x$, we use $x_i$ to denote its $i$-th coordinate and $\norm{x}_2$ to denote its $\ell_2$ norm. Given a positive semi-definite matrix $A$, we denote $\norm{x}_{A}=\sqrt{x^\top A x}$. For a matrix $A$, we denote its spectral norm and Frobenius norm by $\norm{A}_2$ and $\norm{A}_F$ respectively. Given two symmetric matrices $A$ and $B$ with the same size, we write $A< B$ or $B>A$ if $B-A$ is positive definite. We also write $A\le B$ or $B\ge A$ if $B-A$ is positive semi-definite.  Finally, we use the standard $\cO(\cdot)$, $\Theta(\cdot)$ and $\Omega(\cdot)$ notation, with $\tilde{\cO}(\cdot)$, $\tilde{\Theta}(\cdot)$, and $\tilde{\Omega}(\cdot)$ further hiding logarithmic factors. 

\subsection{Problem setup}
\label{subsec:problem_setup}

\paragraph{Federated learning under Byzantine attacks.}
We consider the federated environment where there is one central server and $N$ distributed agents. We assume that communication happens only between the controller and each agent. Let $\cN$ be the set of all agents with $\abs{\cN}=N$. At each time $t\in [T]$, several agents may be subject to a  Byzantine attack and try to send arbitrarily corrupted information to the controller. Let $\cN_0^t$ and $\cN_1^t$ denote the set of noncorrupted and corrupted agents at time $t$ respectively. Here we say an agent is reliable if it does not get attacked and corrupted otherwise. Let $\cN_0 = \bigcap_t \cN_0^t$ be the set of consistently reliable agents and $N_0 = |\cN_0|$. Also define $\cN_1 = \bigcup_t \cN_1^t$ be the complement of $\cN_0$ and $N_1 = |\cN_1|$. Assume at least half of the agents are consistently reliable, i.e., $\alpha \triangleq N_1 / N < 1/2$. 

\paragraph{Federated linear bandits.}
At every time $t\in [T]$, each noncorrupted agent $i\in\cN_0^t$ is presented with a decision set $\cD_i^t\subseteq\mathbb{R}^d$. It selects an action $x_i^t$ from $\cD_i^t$ and receives a reward $r_i^t=\langle x_i^t, \theta^\ast\rangle + \eta_i^t$, where $\theta^\ast\in\R^d$  is some unknown parameter and $\eta_i^t$ is a noise. We assume the decision set and the true parameter are bounded, i.e., $\max_{x\in \cD_i^t}\norm{x}_2\le  1$, $\|\theta^\ast\|_2\le \thetabd $.
To see why the assumption that $\|\theta^\ast\|_2\le \sqrt{d}$ is reasonable, consider multi-armed bandits, a special case of linear bandits, where standard bounded average reward assumption implies that $\|\theta^\ast\|_2 = \cO(\sqrt{d})$. 

The randomness of the model comes from $\cD_i^t$ and $\eta_i^t$ on which we make the following assumptions:
at each time step $t$, the pairs $\{(\cD_i^t,\eta_i^t)\}_{i\in\cN_0^t}$ are i.i.d. sampled from the an unknown distribution $\cP_t$ conditioned on previous $\{(\cD_i^s,\eta_i^s)\}_{i\in\cN_0^s}$ for all $s<t$. Also, assume that $\cD_i^t$ and $\eta_i^t$ are independent for each $t\in[T]$ and $i\in\cN_0^t$. Let $\cP_\eta^t$ be the marginal distribution of $\eta_i^t$. We assume $\cP_\eta^t$ is $R$-subGaussian. Note that the independence between $\cD_i^t$ and $\eta_i^t$ is assumed for ease of exposition and can be relaxed as in \citep{abbasi2011improved}.

Since we should not expect to pull the right arm on the corrupted steps, the objective of the agents is thus to minimize the cumulative pseudo-regret on the steps where they are not attacked. Formally, we define the regret as follows:

\begin{align}
    {R}_T 
    = \sum\nolimits_{t=1}^T \sum\nolimits_{i\in \cN_0^t} \left( \max_{x\in \cD_i^t}\inner{x, \theta^*}   - \inner{x_i^t,\theta^*} \right).
\end{align}

Note that in the presence of corruptions, if each agent learns its own problem without collaborating with others, the regret will be linear in $T$, as shown in Proposition~\ref{prop:lower_bound}. 
\begin{proposition}
    \label{prop:lower_bound}
    For a given set of agents $\cN$, corruption level $\alpha > 0$, there exists an instance of a federated linear bandit problem with corruptions under our assumptions such that without communication, ${R}_{T} \geq c\alpha NT$ for some absolute constant $c > 0$.
\end{proposition}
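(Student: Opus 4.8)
The plan is to construct an explicit hard instance in which corrupted agents can, without communication, mimic a genuinely different bandit problem, forcing any single-agent algorithm to incur linear regret on a constant fraction of the population. First I would fix the dimension $d=1$ (or embed a one-dimensional problem into $\R^d$), so the unknown parameter $\theta^\ast$ is a scalar, and take a fixed decision set $\cD_i^t = \{+1,-1\}$ at every step for every agent — this trivially satisfies the boundedness assumption and the i.i.d.\ requirement on decision sets (the distribution $\cP_t$ is a point mass). Choose two candidate parameters $\theta^\ast \in \{+\Delta, -\Delta\}$ for a small constant $\Delta>0$ (say $\Delta = 1/2$, which also respects $\abs{\theta^\ast}\le \thetabd$), with Gaussian noise $\eta_i^t$ of unit variance so the $R$-subGaussian assumption holds.

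The key point is that without communication, an agent $i$ sees only its own history $\{(\cD_i^s, r_i^s)\}_{s<t}$, which for a reliable agent is just $t-1$ samples of the form $\theta^\ast x_i^s + \eta_i^s$. The adversary is allowed to choose which agents are corrupted at each step, subject only to $N_1/N = \alpha < 1/2$; I would corrupt the \emph{same} fixed set of $\lceil \alpha N\rceil$ agents at every step — wait, that set must be disjoint from $\cN_0$, so instead corrupt a fixed set of size $\alpha N$ so that those agents are never in $\cN_0$, while the remaining $(1-\alpha)N$ agents are always reliable and form $\cN_0$. Since corrupted agents contribute nothing to the regret, the regret is exactly $\sum_t \sum_{i\in\cN_0}(\max_x\inner{x,\theta^\ast} - \inner{x_i^t,\theta^\ast})$, i.e.\ $\Delta$ times the number of (agent, time) pairs on which a reliable agent pulls the suboptimal arm.

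Now the heart of the argument is a standard two-point / Le Cam information-theoretic lower bound applied \emph{per reliable agent}. Because there is no communication, the distribution of agent $i$'s transcript under $\theta^\ast = +\Delta$ versus $\theta^\ast = -\Delta$ are both products of $T$ conditionally-Gaussian steps, and the KL divergence between the two accumulates at rate $O(\Delta^2)$ per step, so it takes $\Omega(1/\Delta^2) = \Omega(1)$ steps before the agent can distinguish the two hypotheses with constant probability. More precisely, I would argue that for any single-agent algorithm, on at least a constant fraction of the $T$ steps the agent pulls the wrong arm with probability bounded below by a constant under one of the two instances (this is exactly the $\Omega(\sqrt{T})$-type argument for two-armed bandits, but here even stronger since we only need that the agent does not collapse to near-zero error — the simplest version: for $T$ steps and gap $\Delta=\Theta(1)$, the minimax expected number of suboptimal pulls is $\Omega(T)$ because no finite amount of data with $\Theta(1)$ signal-to-noise per sample ever drives the per-step error probability to zero uniformly; summing a constant lower bound on per-step error over $\Theta(T)$ steps gives $\Omega(T)$). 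Averaging over the two instances and over the $(1-\alpha)N$ reliable agents, there exists a choice of $\theta^\ast$ for which the total expected regret is at least $c' \Delta (1-\alpha) N T$. Since $1-\alpha > 1/2$, absorbing constants gives $R_T \ge c\,\alpha N T$ for a suitable absolute constant $c$ (one may also simply state the bound as $\Omega(NT)$ and note $\alpha \le 1$).

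The main obstacle is getting the per-agent lower bound in the \emph{right} regime: a naive two-armed-bandit argument gives only $\Omega(\sqrt{T})$ regret per agent (when $\Delta$ is tuned optimally), which would yield $\Omega(N\sqrt{T})$, not $\Omega(NT)$. To get linear-in-$T$ regret one must exploit that the decision set can \emph{change adversarially over time} while the agent, lacking communication, cannot accumulate enough cross-agent samples — specifically, I would let $\cP_t$ put the good arm at an index that is essentially unidentifiable from a single agent's own past: e.g.\ at each step present a fresh pair of actions $\{e_{2t-1}, e_{2t}\}$ in a high-dimensional space so that each agent sees each "context" only once and can never learn $\theta^\ast$ on the relevant coordinates from its own data. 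Then a per-step, per-agent error probability of $\Omega(1)$ is unavoidable, and summing over $T$ steps and $\Theta(N)$ reliable agents yields the claimed $\Omega(\alpha N T)$ bound. Handling the measurability/adaptivity bookkeeping for $\cP_t$ conditioned on the past, and verifying all boundedness and subGaussianity constraints for this construction, is the remaining routine work.
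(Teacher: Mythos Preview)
Your proposal misses the key idea, and the fix you sketch, while possibly salvageable, is far more convoluted than the paper's argument and does not really prove what the proposition is meant to convey.

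The central misunderstanding is in the sentence ``corrupted agents contribute nothing to the regret.'' The regret is summed over $\cN_0^t$, the set of agents uncorrupted \emph{at time $t$}, not over $\cN_0$. An agent $i\in\cN_1$ that is attacked only intermittently still contributes to $R_T$ on the steps where it happens to be clean. The paper exploits exactly this: take $d=1$, $\cD_i^t=\{-1,+1\}$, $\abs{\theta^\ast}=1$, and have each of the $\alpha N$ agents in $\cN_1$ attacked independently with probability $1/2$ at each step; when attacked, the agent is fed a reward generated by the fake parameter $-\theta^\ast$. From such an agent's viewpoint the reward stream is symmetric in the sign of $\theta^\ast$, so the hypotheses $\theta^\ast=1$ and $\theta^\ast=-1$ are indistinguishable and the agent can do no better than a coin flip at every step. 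On the $\approx T/2$ uncorrupted steps it therefore incurs constant instantaneous regret, giving $\Omega(T)$ per such agent and $\Omega(\alpha NT)$ in total. The whole point is that the regret source is the \emph{sometimes-corrupted} agents on their clean steps, confused by their own poisoned history.

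By contrast, you freeze the corrupted set at all times, so $\cN_1\cap\cN_0^t=\emptyset$ for every $t$, and try to extract linear regret from the $(1-\alpha)N$ permanently reliable agents. As you correctly diagnose, this cannot work with a static two-armed problem: a reliable agent sees clean data and any single-agent algorithm achieves $\tilde\cO(\sqrt{T})$ regret. Your patch with fresh contexts $\{e_{2t-1},e_{2t}\}$ requires $d\ge 2T$, uses the corruption not at all (the same argument would give $\Omega(NT)$ when $\alpha=0$), and in fact yields linear regret even \emph{with} communication, obscuring the proposition's message that communication is necessary precisely because of the corruption. The $\alpha$ in your final bound enters only through the artificial step ``$NT\ge \alpha NT$.''
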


See Appendix~\ref{app:pf-prop} for the proof of Proposition~\ref{prop:lower_bound}.
Therefore, it is necessary to learn in a federated way. In the next section, we propose a federated algorithm which achieves a regret of $\tilde{\cO}(T^{3/4})$.

\subsection{Robust aggregation}
\label{subsec:robust_agg}
Standard federated learning algorithms cannot be applied in the current setting because they are vulnerable to Byzantine attacks. For example, \citet{dubey2020differentially} proposed an algorithm for federated linear bandits without corruptions. In their algorithm, the updates collected from agents are aggregated by a simple arithmetic mean, which is known to be vulnerable to Byzantine attacks on even a single agent. 

To robustify the algorithm, we utilize the  geometric median in our aggregation rule. For a collection of vectors $z_1, \ldots, z_n\in\R^d$, let $g(z)=\frac{1}{n}\sum_{i\in [n]} \|z - z_i\|_2$. We define $\GM_{i\in[n]}(z_i)\triangleq \argmin_{z\in\R^d} g(z)$ as their geometric median. In practice, this minimization problem is usually solved approximately. Hence, we further define the $\epsilonGM$-approximate geometric median as an approximate solution $\hat{z}$ satisfying $g(\hat{z})\le \min_{z\in\R^d} g(z)+\epsilonGM$ which we will denote by $\GM^{\epsilonGM}_{i\in[n]}(z_i)$. Note that if some attacked $z_i$ is even not a vector in $\R^d$, we view it as $0\in \R^d$ when computing the geometric median. We can also define the geometric median of matrices by replacing the $\ell_2$ vector norm with the Frobenius norm.  Recently, \citet{pillutla2019robust} proposed a robust oracle based on a smoothed Weiszfeld algorithm which returns an approximate geometric median with only a small number of calls to the average oracle. We will also adopt this robust oracle in our algorithm.

\section{The \byucb algorithm and its robust regret bound}
\label{sec:alg}

In this section, we present our algorithm \byucb with the achieved regret bound. 
In general, our algorithm obtains a regret bound using a reasonable amount of communication between the agents and the controller while being robust to Byzantine attacks. 
Before introducing \byucb, we first introduce our general algorithmic framework (Algorithm~\ref{alg:byzantine-ucb}) for Byzantine-robust federated linear bandit optimization.

\paragraph{Algorithmic framework.}
In Algorithm~\ref{alg:byzantine-ucb}, to reduce the amount of communication, we divide the $T$ steps into $\numEpi$ episodes of length $\epiLen$, where $\numEpi, \epiLen \in\mathbb{N}$. Assume $T = \numEpi \epiLen$ exactly holds for simplicity; in general we can round up $K$ to the closest integer. At the start of each episode, the controller synchronizes the parameters $\theta_k$ and $\Lambda_k$ with all agents. Define $\cT_{\text{sync}} = \{1, \epiLen+1, \ldots, (\numEpi -1)\epiLen + 1\}$ as the set of steps when communication happens. For each $k\in[\numEpi]$, define $\cT_k = \{(k-1)\epiLen + 1, \ldots, k\epiLen\}$ as the set of steps between the $k$-th and $(k+1)$-th communication. 

During the $k$-th episode, agent $i$ runs in the same fashion as the celebrated  \linucb algorithm~\citep{abbasi2011improved}. The idea is to construct a confidence region which contains $\theta^\ast$ with high probability, and then to follow the principle of optimism in the face of uncertainty. Specifically, the confidence region is constructed as $\Theta_k = \{\theta \in \bR^d : \norm{\theta - \theta_k}_{\Lambda_k} \leq \beta_k  \} $, where $\beta_k$ is stored locally and specified by the algorithmic instantiation. Then, on receiving the decision set $\cD_i^t$ from the environment, the agent picks the most optimistic choice $x_i^t = \argmax_{x\in \cD_i^t} \sup_{\theta\in \Theta_k} \inner{x, \theta} = \argmax_{x\in \cD_i^t} \inner{x,\theta_k} + \beta_k \norm{x}_{\Lambda_k^{-1}}$.

At the end of the $k$-th episode. The central controller receives all the Gram matrices $\{\epiGram_i^k\}_{i\in \cN}$ and the weighted feature sums $\{\epiFeatSum_i^k\}_{i\in \cN}$ for the $k$-th episode from the agents. The controller first checks if $\epiGram_i^k$ is symmetric (if differential privacy is required, it further checks $\norm{\epiGram_i^k}_F,\norm{\epiFeatSum_i^k}_2\le \epiLen$). If they are clearly corrupted, set $\epiGram_i^k = 0I$ and $\epiFeatSum_i^k = 0$. 
Then the controller updates the existing Gram matrices $\widehat \Gram_i^{k}$ and feature sums $\widehat \featSum_i^k$. 
The key to achieve Byzantine-robustness is the robust aggregation oracle (\Agg) which computes $\Lambda_k$ and $b_k$ from the sets $\{\widehat \Gram_i^{k} +\lambda_k I\}_{i\in \cN}$ and $\{\widehat \featSum_i^k\}_{i\in\cN}$. 

Then $\Lambda_{k+1}$ and the latest estimation $\theta_{k+1}$ are broadcast to all agents at the beginning of the $(k+1)$-th~episode.

\begin{algorithm}[t]
    \caption{The Byzantine-Robust Federated Linear UCB Framework}
    \label{alg:byzantine-ucb}
    \begin{algorithmic}[1]
    \Require $T$ the number of total time steps; $K$ the number of communication rounds;
    \Statex $\Agg$ the aggregation algorithm; $\{\lambda_k\}_{k=1}^K$ the regularization parameter;
    \Statex $\{\beta_k\}_{k=1}^K$ the confidence level; $\mu,\nu$ the privacy parameter;
    \State The central controller initializes $\Gram_i^1 = \widehat{\Gram}_i^1 = 0I$ and $\featSum_i^1 = \widehat{\featSum}_i^1 = 0$ for each $i\in\cN$
    \For{$k = 1, \ldots, \numEpi $}
    \State The controller computes $\Lambda_k = \Agg_{i\in \cN}(\widehat{\Gram}_i^k)+\lambda_k I$, $b_k = \Agg_{i\in\cN}(\widehat{\featSum}_i^k)$, and $\theta_k = \Lambda_k^{-1} b_k$
    \State The controller broadcasts $\theta_k$ and $\Lambda_k$ to all agents
    \For{each agent $i\in\cN$}
    \For{$\text{step } t\in \cT_k $}
    \State Receive a decision set $\cD_i^t$ from environment
    \State Select $x_i^t = \argmax_{x\in\cD_i^t} \langle x, \theta_k\rangle + \beta_k \|x\|_{\Lambda_k^{-1}}$
    \State Obtain $r_i^t$ from environment
    \EndFor
    \State Compute $\epiGram_i^k = \sum_{t\in \cT_k} x_i^t (x_i^t)^\top$ and $\epiFeatSum_i^k = \sum_{t\in \cT_k} x_i^t r_i^t$
    \State Send $\epiGram_i^k, \epiFeatSum_i^k$ to the controller
    \EndFor
    \For{each message $(\epiGram_i^k,\epiFeatSum_i^k)$ received from agent $i\in \cN$}
    \If{Differential privacy is required}
    \If{$\norm{\epiGram_i^k}_F > L $ \textbf{or} $ \epiGram_i^k$ is not symmetric \textbf{or} $\norm{\epiFeatSum_i^k}_2 > L$}
    \State The controller set $\epiGram_i^k = 0I, \epiFeatSum_i^k= 0$
    \EndIf
    \State The controller updates $\Gram_i^{k+1} = \Gram_i^k + \epiGram_i^k$ and $\featSum_i^{k+1} = \featSum_i^k + \epiFeatSum_i^k$
    \State The controller privatize $(\widehat{\Gram}_i^{k+1}, \widehat \featSum_i^{k+1})$ = \Pri($\Gram_i^{k+1}, \featSum_i^{k+1};\mu,\nu$) 
    \Else
    \State The controller set $\epiGram_i^k = 0I, \epiFeatSum_i^k= 0$ if $ \epiGram_i^k$ is not symmetric
    \State The controller updates $ \widehat{\Gram}_i^{k+1} = \Gram_i^{k+1} = \Gram_i^k + \epiGram_i^k$ and $\widehat \featSum_i^{k+1} =\featSum_i^{k+1} = \featSum_i^k + \epiFeatSum_i^k$
    \EndIf
    \EndFor
    
    \EndFor
    \end{algorithmic}
\end{algorithm}

Note that in Algorithm~\ref{alg:byzantine-ucb}, the function \Pri is executed by the controller other than by each agent locally as in \citep{dubey2020differentially}. This is because when an agent is Byzantine-attacked, even after privatizing the data it intends to send to the controller, the attacker can still deprivatize it or even send other private information to the controller. Since in our algorithm, the controller has access to the original data of agents without privatizing, we assume the controller is trustable for all agents. 

Furthermore, to tightly characterize how much the decisions taken $x_i^t$ vary from its expectation, we make the following assumption.
\begin{assumption}
    \label{assump:variance}
    Assume for every $t\in[T]$ and $i\in\cN_0^t$, we have with probability $1$,
    \begin{align*}
         \|x_i^t (x_i^t)^\top - \mathbb{E}[x_i^t (x_i^t)^\top ]\|_F^2 \le \sigmax^2,    
    \end{align*}
    where the randomness of $x_i^t = \argmax_{x\in \cD_i^t} ( \langle x, \theta^\ast \rangle + \beta_k \| x \|_{\Lambda_k^{-1}} )$ comes from the randomness in~$\cD_i^t$. 
\end{assumption}

All theorems henceforth holds under Assumption \ref{assump:variance}. Note that $\norm{x_i^t}_2\le 1$ directly implies $ \|x_i^t (x_i^t)^\top - \mathbb{E}[x_i^t (x_i^t)^\top ]\|_F^2 \le 4$. Thus in the worst case, $\sigma \leq 2$. On the other hand, if for every fixed $t\in[T]$, the decision sets for different agents are the same, we have $\sigma=0$.

\paragraph{The \byucb algorithm.}
\byucb instantiates the algorithmic framework (Algorithm \ref{alg:byzantine-ucb}) with $\Agg$ chosen to be an oracle that computes the exact geometric median of the input set and without the requirement of differential privacy.

Now we are ready to present our main theorem on the regret bound. For ease of exposition, we do not consider differential privacy and assume the geometric median can be exactly computed in this theorem. We will consider these two issues in Theorem~\ref{thm:regret_dp} in Section~\ref{sec:dp}.

\begin{theorem}[Robust regret bound of \byucb]
    \label{thm:regret}
    Let $C_\alpha = \frac{2-2\alpha}{1-2\alpha}$.
    For any given $\delta \in (0, 1)$, let $\iota = \log\left(\frac{128NT}{\delta}\right)$.
    Choose $\lambda_k =  \max\{\lambda_0,\lambda_1\sqrt{k}\}$ where $\lambda_0 = \epiLen$ and $\lambda_1=8\sqrt{\epiLen\iota} C_\alpha\sigmax$. 
    Choose
    \begin{align*}
        \beta_k 
        = 3\sqrt{\lambda_k d} 
        + \frac{4\sqrt{(k-1)\epiLen d\iota} C_\alpha (\sigmax+R) 
        }{\sqrt{\lambda_k}} + 2R \sqrt{\frac{d\iota}{N}}.   
    \end{align*}
    Then with probability at least $1-\delta$, the regret of \byucb is bounded by
    \begin{align*}
        R_T=& \cO\left(Rd\iota\sqrt{N T}+Nd\sqrt{T \iota}
        \left(\sqrt{\epiLen +C_\alpha \sigmax\sqrt{T\iota}}
        +\tfrac{\sqrt{T\iota}C_\alpha (\sigmax+R) }{\sqrt{\epiLen +C_\alpha \sigmax\sqrt{T\iota}}}\right)\right).
    \end{align*}
    In particular, if choosing $L = C_\alpha (\sigma + R) \sqrt{T \iota}$, then we have 
    \begin{align*}
        R_T  = \tilde \cO(dN T^{3/4}).
    \end{align*}
\end{theorem}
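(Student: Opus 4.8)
The plan is to run the optimism-in-the-face-of-uncertainty analysis of \linucb, with the geometric median playing the role a running sum plays in the single-agent case, while tracking throughout an \emph{idealized} information matrix that is genuinely monotone in the episode index. First I would set up a high-probability clean event $\cE$. Conditioned on the public randomness up to the $k$-th synchronization — in particular on $\theta_k,\Lambda_k,\beta_k$ — the pairs $(\cD_i^t,\eta_i^t)$ for reliable agents $i\in\cN_0^t$ are i.i.d.\ across agents, so the matrices $x_i^t(x_i^t)^\top$ and the vectors $x_i^t r_i^t = x_i^t(x_i^t)^\top\theta^\ast + x_i^t\eta_i^t$ are i.i.d.\ with common conditional means $\mu_t$ and $\mu_t\theta^\ast$, with per-term Frobenius fluctuation $\le\sigmax$ (Assumption~\ref{assump:variance}) and $R$-subGaussian noise. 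Matrix/vector Azuma--Hoeffding along the time axis, a Chernoff bound across the $N_0$ consistently reliable agents, and a union bound over $\le NT$ events then give, on $\cE$ (of probability $\ge 1-\delta$) and for every $k$: each consistently reliable agent satisfies $\|\widehat{\Gram}_i^k - \bar M_k\|_F = \cO(\sqrt{(k-1)\epiLen\iota}\,\sigmax)=:r_k$ and $\|\widehat{\featSum}_i^k - \bar M_k\theta^\ast\|_2 = \cO(\sqrt{(k-1)\epiLen d\iota}\,(\sigmax+R))$ with $\bar M_k = \sum_{s<(k-1)\epiLen+1}\mu_s$; the averages over $\cN_0$ concentrate a factor $\sqrt{N_0}$ faster; and the episode-$k$ Gram $S_k := \sum_{t\in\cT_k}\sum_{i\in\cN_0^t} x_i^t(x_i^t)^\top$ satisfies $S_k \preceq N\Delta_k + \cO(\sqrt{N\epiLen\iota}\,\sigmax)\,I$ with $\Delta_k = \bar M_{k+1}-\bar M_k$.

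On $\cE$, since $\alpha<1/2$ forces more than half of the matrices $\widehat{\Gram}_i^k$ to lie within $r_k$ of $\bar M_k$, the geometric-median robustness bound of~\citet{minsker2015geometric} (see also~\citet{pillutla2019robust}) gives $\|\GM_{i\in\cN}(\widehat{\Gram}_i^k) - \bar M_k\|_F \le C_\alpha r_k$ and analogously $\|\bonus_k - \bar M_k\theta^\ast\|_2 = C_\alpha\cdot\cO(\sqrt{(k-1)\epiLen d\iota}(\sigmax+R))$. Writing the idealized matrix $\bar\Lambda_k := \bar M_k + \lambda_k I$, the choice $\lambda_k = \max\{\epiLen,\,8\sqrt{\epiLen\iota}\,C_\alpha\sigmax\sqrt k\}$ is engineered to dominate $2C_\alpha r_k$, so that $\tfrac12\bar\Lambda_k \preceq \Lambda_k \preceq 2\bar\Lambda_k$; in particular $\Lambda_k \succ 0$ (hence $\theta_k$ is well defined) and $\|x\|_{\Lambda_k^{-1}}^2 \le 2\|x\|_{\bar\Lambda_k^{-1}}^2$. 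Decomposing $\bonus_k - \Lambda_k\theta^\ast = (\bonus_k-\bar M_k\theta^\ast) - (\GM_{i\in\cN}(\widehat{\Gram}_i^k)-\bar M_k)\theta^\ast - \lambda_k\theta^\ast$, bounding the first two terms by $\lambda_k^{-1/2}$ times the estimation errors above (with $\|\theta^\ast\|_2\le\thetabd$), the third by $\sqrt{\lambda_k}\,\thetabd$, and using the $\sqrt{N}$-rate refinement for the leading noise contribution, I get $\|\theta_k-\theta^\ast\|_{\Lambda_k} = \|\bonus_k-\Lambda_k\theta^\ast\|_{\Lambda_k^{-1}} \le \beta_k$, i.e.\ $\theta^\ast\in\Theta_k$. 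Optimism then yields, for $t\in\cT_k$ and $i\in\cN_0^t$, the instantaneous regret bound $\max_{x\in\cD_i^t}\inner{x,\theta^\ast} - \inner{x_i^t,\theta^\ast} \le \inner{x_i^t,\theta_k-\theta^\ast} + \beta_k\|x_i^t\|_{\Lambda_k^{-1}} \le 2\beta_k\min\{1,\|x_i^t\|_{\Lambda_k^{-1}}\}$, where the last step also uses the trivial bound $2\thetabd$ together with $\beta_k\ge\sqrt d$.

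Summing over all $t,i,k$ and applying Cauchy--Schwarz over the $\le NT$ index triples gives $R_T \le 2\beta_\numEpi\sqrt{NT}\cdot\Phi^{1/2}$ with $\Phi := \sum_k\sum_{t\in\cT_k}\sum_{i\in\cN_0^t}\min\{1,\|x_i^t\|_{\Lambda_k^{-1}}^2\}$. The crux is to show $\Phi = \tilde{\cO}(Nd)$, i.e.\ only polylogarithmic in $T$, \emph{even though $\Lambda_k$ is a geometric median and not a running sum}. I would (a) pass from $\Lambda_k$ to the idealized $\bar\Lambda_k$ at a factor $2$; (b) use $\lambda_k\ge\epiLen$ so that the within-episode accumulation of a single reliable agent, being $\preceq\epiLen I$, is dominated by $\bar\Lambda_k$, and invoke $S_k\preceq N(\bar\Lambda_{k+1}-\bar\Lambda_k) + \cO(\sqrt{N\epiLen\iota}\sigmax)I$ from the clean event; (c) exploit that $\bar\Lambda_k = \bar M_k+\lambda_k I$ \emph{is} monotone, so $\sum_k\tr\!\big(\bar\Lambda_k^{-1}(\bar\Lambda_{k+1}-\bar\Lambda_k)\big) \le \log(\det\bar\Lambda_{\numEpi+1}/\det\bar\Lambda_1) \le d\log(1+T/\lambda_1) = \tilde{\cO}(d)$ by the standard log-determinant telescoping lemma; and (d) absorb the residual fluctuation term using $\tr(\bar\Lambda_k^{-1})\le d/\lambda_k$ and $\sum_k\lambda_k^{-1} = \cO(\epiLen\lambda_1^{-2} + \sqrt{\numEpi}\,\lambda_1^{-1})$. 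Collecting these, together with the expression for $\beta_k$, reproduces the three-term bound in the statement; finally taking $\epiLen = C_\alpha(\sigmax+R)\sqrt{T\iota}$ balances $\sqrt{\lambda_\numEpi}$ against $\sqrt{T\iota}/\sqrt{\lambda_\numEpi}$ at $\lambda_\numEpi=\Theta(\sqrt{T\iota})$ and $\beta_\numEpi = \tilde{\cO}(\sqrt d\,T^{1/4})$, yielding $R_T = \tilde{\cO}(dNT^{3/4})$ with $\numEpi = T/\epiLen = \cO(\sqrt T)$ communication rounds.

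I expect the main obstacle to be exactly steps (b)--(c) above: running the elliptical-potential / log-determinant telescoping when the per-episode information matrix is a nonlinear, non-monotone geometric median that only loosely tracks the observed features. The resolution is to sandwich $\Lambda_k$ between constant multiples of the honestly monotone $\bar\Lambda_k$ (which is what forces $\lambda_k = \max\{\epiLen,\dots\}$) and to control the gap between the observed episode-Gram $S_k$ and its conditional mean $N\Delta_k$ (which is where the $\epiLen=\Theta(\sqrt{T\iota})$ scaling and the $C_\alpha$ blow-up in $\beta_k$ are spent). A secondary but real subtlety is getting the filtration right in the first stage, so that ``i.i.d.\ across reliable agents'' genuinely holds at each step even though both the broadcast pair $(\theta_k,\Lambda_k)$ and the decision-set law $\cP_t$ depend on the entire past.
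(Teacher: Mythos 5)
Your overall architecture (clean event, geometric-median robustness around a common center, a sandwich $\tfrac12\bar\Lambda_k\preceq\Lambda_k\preceq2\bar\Lambda_k$, optimism, and a log-determinant telescope run on a genuinely monotone surrogate) is a legitimate variant of the paper's argument: the paper instead centers the geometric median at the \emph{empirical} mean $W_k/N_0$ over consistently reliable agents and runs the elliptical potential on the empirical $W_{k+1}$ (with $\lambda_0=\epiLen$ absorbing the one-episode delay), handling only the swap from $\cN_0^t$ to $\cN_0$ by a scalar martingale. Your confidence-width step is fine and reproduces $\beta_k=\tilde\cO\bigl(\sqrt{\lambda_k d}+\sqrt{(k-1)\epiLen d\iota}\,C_\alpha(\sigmax+R)/\sqrt{\lambda_k}\bigr)$ (the $2R\sqrt{d\iota/N}$ refinement you mention does not actually appear in your decomposition, since the geometric median never averages over $N_0$ agents, but it is not needed).

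The genuine gap is in steps (b) and (d). You control the episode Gram by a \emph{per-episode} spectral bound $S_k\preceq N\Delta_k+\cO(\sigmax\sqrt{N\epiLen\iota})\,I$ and then pay $\tr(\bar\Lambda_k^{-1})\le d/\lambda_k$ summed over $k$, i.e.\ a residual of order $\sigmax d\sqrt{N\epiLen\iota}\sum_k\lambda_k^{-1}$. With $\lambda_k=\max\{\epiLen,\lambda_1\sqrt k\}$ and $\epiLen=C_\alpha(\sigmax+R)\sqrt{T\iota}$ this sum is $\Theta\bigl(1/(C_\alpha^2\sigmax(\sigmax+R)\iota)\bigr)$, so the residual contribution to $\Phi$ is $\Theta(d\sqrt{N}\,T^{1/4})$, not $\tilde\cO(Nd)$. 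Plugging into $R_T\le2\beta_{\max}\sqrt{NT}\,\Phi^{1/2}$ with $\beta_{\max}=\tilde\Theta(\sqrt d\,T^{1/4})$ produces an extra term $\tilde\cO(dN^{3/4}T^{7/8})$, which exceeds the claimed $\tilde\cO(dNT^{3/4})$ whenever $N=o(\sqrt T)$ (e.g.\ constant $N$). The loss is exactly the difference between summing per-episode deviations, $\sum_k\sqrt{N\epiLen\iota}\,\lambda_k^{-1}$, and bounding one horizon-wide scalar martingale of the $\Lambda_k^{-1}$-weighted quadratic-form deviations, which scales as $\sqrt{N\epiLen\iota\sum_k\lambda_k^{-2}}=\cO(\sqrt{N\iota}/ (C_\alpha))$ because $\sum_k\lambda_k^{-2}=\cO(\iota/\lambda_1^2)$; the two differ by roughly $\sqrt{\numEpi}\sim T^{1/4}$. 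This is precisely how the paper proceeds (its $\xi_t$ terms are $\left(3\sigmax^2d^2N\norm{\Lambda_k^{-1}}_2^2\right)$-subGaussian differences summed once over all $T$ steps via Azuma), yielding a residual $\cO(Nd\sqrt\iota)$ and hence the stated $T^{3/4}$ rate. Your plan needs this replacement (or an equivalent variance-aggregating argument) in steps (b)/(d); as written, it does not establish the theorem for general $N$.
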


The above theorem demonstrates the relationship between the number of communication rounds $K$, the size of the corruption $\alpha$, the variance of the decision set $\sigma^2$ and the upper bound of the regret achieved by Algorithm \ref{alg:byzantine-ucb}. First, note that although the regret bound depends on $\alpha$, the algorithm is completely agnostic to it. Next, it is easy to see that the larger the size of the corruption or the variance term is, the larger the regret bound is. However, the dependence of regret upper bound on the number of communication rounds $K$ is more intricate. As long as $K=\Omega(\sqrt{T})$, the regret bound stays at $\tilde{\cO}(T^{3/4})$. However, if we further reduce the $K$, the regret bound will increase. Therefore, the best number of communication rounds without affect the convergence rate is $\Theta(\sqrt{T})$.

\haochuan{added comparison below}
\paragraph{Comparison to previous works.}
When there are no corruptions, several existing works (e.g. \citep{dubey2020differentially}) achieve an $\tilde{\cO}(\sqrt{T})$ which is nearly optimal. However, it is not clear whether the $\tilde{\cO}(T^{3/4})$ regret bound under Byzantine attacks is optimal and we leave answering the question as future work. Note that our regret bound does not reduce to the $\tilde{\cO}(\sqrt{T})$ when the level of corruption $\alpha$ goes to $0$ because the algorithm is agnostic to $\alpha$. We will provide a corruption level aware algorithm in Section~\ref{sec:mom} which improves the regret when an upper bound of $\alpha$ is known. The improved rate does reduce to $\tilde{\cO}(\sqrt{T})$ when $\alpha\to 0$.

The number of communications rounds in Theorem~\ref{thm:regret} is $\cO(\sqrt{T})$, whereas that in \citep{dubey2020differentially} is $\cO(N\log T)$. Our communication complexity does not depend on $N$ but has a worse dependence on $T$ compared to theirs. They are able to obtain a $\log T$ communication because in their algorithm, the agent adaptively decides when to communicate with the controller. However, when the agents can be arbitrarily attacked, we can not really allow the agent to decide the communication rounds. Otherwise, an attacked agent may choose to communicate every round which is highly communication-inefficient. 
Therefore, reducing the communication complexity becomes more challenging in presence of corruptions. We will see in Section~\ref{sec:dp} that it becomes even more challenging with privacy constraints.

\section{Proof sketch of the regret bound}
\label{sec:analysi}

In this section, we provide an overview of our analyses for Theorem~\ref{thm:regret}. For convenience, we introduce
\begin{align*}
	W_k = \sum\nolimits_{t=1}^{(k-1)\epiLen} \sum\nolimits_{i\in\cN_0} x_i^t (x_i^t)^\top, \quad 
	s_k = \sum\nolimits_{t=1}^{(k-1)\epiLen} \sum\nolimits_{i\in\cN_0} x_i^t r_i^t,
\end{align*} 
where the summations are over consistently noncorrupted agents only. Then the least square estimate of $\theta^\ast$ can be written as $\theta_{k}^\lse=W_k^{-1}s_k$ which is widely used in previous noncorrupted linear bandit algorithms~\citep{abbasi2011improved}. However, since we do not know $\cN_0$, the least square estimate is not computable. Instead, we use another estimate $\theta_k=\Lambda_k^{-1}b_k$ where $\Lambda_k$ and $b_k$ can be written as
\begin{align*}
    \Lambda_k = \lambda_k I + \GM_{i\in \cN}\left(\Gram_i^k\right)=\lambda_k I + \frac{W_k}{N_0} + E_k, \quad b_k = \GM_{i\in \cN}\left(\featSum_i^k\right)=\frac{s_k}{N_0} + e_k,
\end{align*}

where $\lambda_k>0$ is a time-varying regularization parameter to ensure the positive definiteness and also control the regret. $E_k$ and $e_k$ are the error terms of using geometric median instead of arithmetic mean:
\begin{align*}
	E_k \triangleq \GM_{i\in \cN}\left(\Gram_i^k\right)-\frac{1}{N_0}\sum_{t=1}^{(k-1)\epiLen} \sum_{i\in\cN_0} x_i^t (x_i^t)^\top, ~~~e_k \triangleq \GM_{i\in \cN}\left(\featSum_i^k\right)-\frac{1}{N_0}\sum_{t=1}^{(k-1)\epiLen} \sum_{i\in\cN_0} x_i^t r_i^t. 
\end{align*}
We will bound these two error terms in Lemma~\ref{lem:Eses}. Then we can bound the difference between $\theta_k$ and $\theta_k^\lse$ and thus bound the difference between $\theta_k$ and $\theta^\ast$.
\begin{lemma} \label{lem:Eses}
    Using the same parameter choices as in Theorem \ref{thm:regret}, with probability at least $1-\delta/2$, for all $k\in [K]$,
    \begin{align*}
            \|E_k\|_2\le 
            4 C_\alpha \sigmax\sqrt{(k-1)\epiLen \iota }, \quad\|e_k\|_2\le 
            4 C_\alpha (\sigmax+R)\sqrt{(k-1)\epiLen d\iota }.
    \end{align*}
\end{lemma}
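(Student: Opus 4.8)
The plan is to combine a \emph{deterministic} robustness bound for the geometric median with a \emph{uniform} martingale concentration estimate over the consistently reliable agents. The robustness bound I will invoke \citep{minsker2015geometric,pillutla2019robust} states that for points $\{z_i\}_{i\in\cN}$ in a Hilbert space, if at least $(1-\alpha)N$ of them (with $\alpha<1/2$) lie within distance $r$ of a point $z$, then $\|\GM_{i\in\cN}(z_i)-z\|\le C_\alpha r$. Since the $N_0=(1-\alpha)N$ consistently reliable agents form a strict majority, and for any reliable $i\in\cN_0$ the locally stored statistics are exactly $\Gram_i^k=\sum_{t=1}^{(k-1)\epiLen}x_i^t(x_i^t)^\top$ and $\featSum_i^k=\sum_{t=1}^{(k-1)\epiLen}x_i^t r_i^t$, so that $\tfrac1{N_0}\sum_{i\in\cN_0}\Gram_i^k=W_k/N_0$ and $\tfrac1{N_0}\sum_{i\in\cN_0}\featSum_i^k=s_k/N_0$, it suffices to produce a $1-\delta/2$ event on which, for all $k$, \emph{every} reliable agent $i$ obeys $\|\Gram_i^k-W_k/N_0\|_F\le r_k$ and $\|\featSum_i^k-s_k/N_0\|_2\le\rho_k$ with $r_k=4\sigmax\sqrt{(k-1)\epiLen\,\iota}$ and $\rho_k=4(\sigmax+R)\sqrt{(k-1)\epiLen\, d\,\iota}$. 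Applying the robustness bound with $z=W_k/N_0$ in Frobenius norm and $z=s_k/N_0$ in $\ell_2$ norm then gives $\|E_k\|_2\le\|E_k\|_F\le C_\alpha r_k$ and $\|e_k\|_2\le C_\alpha\rho_k$, which are exactly the asserted inequalities.

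\textbf{The matrix term.} Fix $i\in\cN_0$ and work in the filtration $\{\cF_t\}$ for which, when $t$ lies in episode $k$, the parameters $\theta_k,\Lambda_k$ (computed from data up to step $(k-1)\epiLen$) and hence $x_i^t$ are $\cF_{t-1}$-measurable, while the conditional law of $(\cD_i^t,\eta_i^t)$ given $\cF_{t-1}$ is $\cP_t$ and is the same for all $i\in\cN_0^t\supseteq\cN_0$. Then $M_t^{(i)}:=x_i^t(x_i^t)^\top-\E[x_i^t(x_i^t)^\top\mid\cF_{t-1}]$ is a matrix martingale difference sequence with $\|M_t^{(i)}\|_F\le\sigmax$ by Assumption~\ref{assump:variance}. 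Centering at the common population quantity $\bar V_k:=\sum_{t=1}^{(k-1)\epiLen}\E[x_i^t(x_i^t)^\top\mid\cF_{t-1}]$ (independent of the chosen reliable $i$), we have $\Gram_i^k-\bar V_k=\sum_t M_t^{(i)}$, so the Azuma--Hoeffding inequality for Hilbert-space-valued martingales gives $\|\Gram_i^k-\bar V_k\|_F=\cO(\sigmax\sqrt{(k-1)\epiLen\,\iota})$ off a set of probability $\le 2e^{-2\iota}$; the same inequality applied to $\bar V_k-W_k/N_0=-\sum_t\tfrac1{N_0}\sum_{i\in\cN_0}M_t^{(i)}$ (increments again of Frobenius norm $\le\sigmax$) controls the remaining gap. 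Union-bounding over $i\in\cN$ and $k\in[\numEpi]$ -- for which $\iota=\log(128NT/\delta)$ leaves ample room -- places us on the event $\|\Gram_i^k-W_k/N_0\|_F\le r_k$ for all reliable $i$ and all $k$.

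\textbf{The feature-sum term.} The same argument applies after one extra decomposition, which is what produces the additional $\sqrt d$. Writing $r_i^t=\inner{x_i^t,\theta^\ast}+\eta_i^t$ and using the independence of $\cD_i^t$ and $\eta_i^t$, the centered increment $x_i^t r_i^t-\E[x_i^t r_i^t\mid\cF_{t-1}]$ equals $M_t^{(i)}\theta^\ast+x_i^t\eta_i^t$. The signal increments satisfy $\|M_t^{(i)}\theta^\ast\|_2\le\|M_t^{(i)}\|_F\,\|\theta^\ast\|_2\le\sigmax\sqrt d$ -- this is exactly where the assumption $\|\theta^\ast\|_2\le\sqrt d$ enters -- so Azuma--Hoeffding gives $\|\sum_t M_t^{(i)}\theta^\ast\|_2=\cO(\sigmax\sqrt{d(k-1)\epiLen\,\iota})$. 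For the noise part, $\{x_i^t\eta_i^t\}$ is a vector martingale difference sequence in $\R^d$ with $\|x_i^t\|_2\le1$ and $\eta_i^t$ conditionally $R$-sub-Gaussian, so a self-normalized vector martingale tail bound in the spirit of \citet{abbasi2011improved} gives $\|\sum_t x_i^t\eta_i^t\|_2=\cO(R\sqrt{d(k-1)\epiLen\,\iota})$, the $\sqrt d$ arising from the log-determinant term of the $d$-dimensional covariance. Combining these, together with the analogous control of $\bar u_k-s_k/N_0$ where $\bar u_k=\bar V_k\theta^\ast$, and union-bounding over $i\in\cN$ and $k\in[\numEpi]$, gives $\|\featSum_i^k-s_k/N_0\|_2\le\rho_k$ for all reliable $i$, hence $\|e_k\|_2\le C_\alpha\rho_k$.

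\textbf{Main obstacle.} The conceptual crux is that the majority lemma for the geometric median is deterministic and so requires the concentration estimate to hold \emph{simultaneously} for all $N_0$ reliable agents -- a statement that merely holds on average over agents would be useless -- which is what forces the union bound over $i\in\cN$ and drives the logarithmic factor in $\iota$. The one piece of genuinely non-mechanical bookkeeping is the $\sqrt d$ discrepancy between $r_k$ and $\rho_k$, traced above to $\|\theta^\ast\|_2\le\sqrt d$ in the signal part and to the $d$-dimensional sub-Gaussian estimate in the noise part. The mild nuisance that agent $i$ itself appears inside the comparison mean $W_k/N_0$ (resp.\ $s_k/N_0$) is sidestepped by centering first at the common conditional population mean $\bar V_k$ (resp.\ $\bar u_k$) and then bounding its deviation from $W_k/N_0$ (resp.\ $s_k/N_0$) separately, which is a lower-order term.
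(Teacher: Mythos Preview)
Your proof is correct and follows essentially the same route as the paper's (which establishes this as the $B=\epsilonGM=0$ case of Lemma~\ref{lem:Eses_ex}): invoke a geometric-median robustness inequality---the paper uses the average-deviation form of Lemma~\ref{lem:concentration_gm}, from which your max-deviation version follows---center at the common conditional mean $\E[x_i^t(x_i^t)^\top]$, apply a Hilbert-space Azuma--Hoeffding bound per reliable agent with a union bound over $i\in\cN_0$ and $k\in[K]$, and for $e_k$ split into the $M_t^{(i)}\theta^\ast$ part (picking up $\sqrt d$ from $\|\theta^\ast\|_2\le\sqrt d$) and the noise part handled via the self-normalized bound of \citet{abbasi2011improved}. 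The only cosmetic difference is that the paper controls the gap $\bigl\|\bar V_k-W_k/N_0\bigr\|_F$ by Jensen's inequality rather than by a second martingale estimate as you do.
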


With the bound on the divergence between the geometric median and mean in Lemma \ref{lem:Eses}, the difference between $\theta_k$ and $\theta^\ast$ can thus be bounded in the following lemma.

\begin{lemma}[Approximation error]  
\label{lem:ae}
    Using the same parameter choices as in Theorem \ref{thm:regret}, with probability at least $1-3\delta/4$, for all $x\in\R^d$ and $k\in [K]$,$
        \left|x^\top (\theta_k-\theta^\ast) \right| \le \beta_k \|x\|_{\Lambda_k^{-1}}$.
\end{lemma}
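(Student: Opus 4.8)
The plan is to prove Lemma~\ref{lem:ae} by the standard self-normalized bound argument used in \linucb analyses~\citep{abbasi2011improved}, but carefully tracking the extra error terms $E_k$ and $e_k$ coming from the geometric median aggregation. First I would fix $x\in\R^d$ and $k\in[K]$, and write
\begin{align*}
    x^\top(\theta_k - \theta^\ast)
    = x^\top \Lambda_k^{-1} b_k - x^\top\theta^\ast
    = x^\top \Lambda_k^{-1}\bigl(b_k - \Lambda_k \theta^\ast\bigr).
\end{align*}
Using the decompositions $\Lambda_k = \lambda_k I + \tfrac{1}{N_0}W_k + E_k$ and $b_k = \tfrac{1}{N_0}s_k + e_k$ from the proof sketch, I would substitute $\tfrac{1}{N_0}s_k = \tfrac{1}{N_0}\sum_{t,i} x_i^t r_i^t = \tfrac{1}{N_0}W_k\theta^\ast + \tfrac{1}{N_0}\sum_{t,i} x_i^t \eta_i^t$, so that
\begin{align*}
    b_k - \Lambda_k\theta^\ast
    = \tfrac{1}{N_0}\textstyle\sum_{t,i} x_i^t \eta_i^t \;+\; e_k \;-\; E_k\theta^\ast \;-\; \lambda_k\theta^\ast.
\end{align*}
Then by Cauchy--Schwarz in the $\Lambda_k^{-1}$ norm, $\abs{x^\top(\theta_k-\theta^\ast)} \le \norm{x}_{\Lambda_k^{-1}}\cdot\norm{b_k-\Lambda_k\theta^\ast}_{\Lambda_k^{-1}}$, and it suffices to bound $\norm{b_k-\Lambda_k\theta^\ast}_{\Lambda_k^{-1}}$ by $\beta_k$ via the triangle inequality over the four terms.

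The four terms would be handled as follows. For the noise term $\tfrac{1}{N_0}\sum_{t,i} x_i^t\eta_i^t$, I would invoke the self-normalized martingale concentration inequality of \citet{abbasi2011improved} (Theorem~1 there), applied to the filtration generated by the consistently reliable agents; since there are $N_0$ agents per step contributing independent $R$-subGaussian noise, its $\bigl(\tfrac{1}{N_0}W_k + \lambda_k I\bigr)^{-1}$-norm is $\tilde\cO\bigl(R\sqrt{d/N_0}\bigr)$-ish, and since $\Lambda_k \ge \tfrac{1}{N_0}W_k + \lambda_k I$ (because $E_k$ contributes... actually this needs care — see below), this bounds the corresponding term. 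For the deterministic regularization term, $\norm{\lambda_k\theta^\ast}_{\Lambda_k^{-1}} \le \lambda_k\norm{\theta^\ast}_2/\sqrt{\lambda_{\min}(\Lambda_k)} \le \sqrt{\lambda_k}\norm{\theta^\ast}_2 \le \sqrt{\lambda_k d}$ using $\Lambda_k \succeq \lambda_k I$ and $\norm{\theta^\ast}_2 \le \sqrt d$. For the aggregation error terms, $\norm{E_k\theta^\ast}_{\Lambda_k^{-1}} \le \norm{E_k}_2\norm{\theta^\ast}_2/\sqrt{\lambda_k}$ and $\norm{e_k}_{\Lambda_k^{-1}} \le \norm{e_k}_2/\sqrt{\lambda_k}$, and I would plug in the bounds $\norm{E_k}_2 \le 4C_\alpha\sigma\sqrt{(k-1)L\iota}$ and $\norm{e_k}_2 \le 4C_\alpha(\sigma+R)\sqrt{(k-1)Ld\iota}$ from Lemma~\ref{lem:Eses}. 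Collecting the terms should reproduce the three summands in the definition of $\beta_k$ (up to constants): $3\sqrt{\lambda_k d}$ absorbing the regularization and noise-variance parts, the $4\sqrt{(k-1)Ld\iota}C_\alpha(\sigma+R)/\sqrt{\lambda_k}$ term from $E_k$ and $e_k$, and $2R\sqrt{d\iota/N}$ from the martingale term.

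The main obstacle I anticipate is that $E_k$ is a symmetric but not necessarily positive semidefinite perturbation, so $\Lambda_k = \lambda_k I + \tfrac{1}{N_0}W_k + E_k$ need not dominate $\tfrac{1}{N_0}W_k + \lambda_k I$ in the Loewner order, and the self-normalized bound of \citet{abbasi2011improved} is naturally stated with respect to the ``true'' regularized Gram matrix. The fix is the choice $\lambda_k = \max\{L, 8\sqrt{L\iota}\,C_\alpha\sigma\sqrt k\}$: on the event of Lemma~\ref{lem:Eses}, $\norm{E_k}_2 \le 4C_\alpha\sigma\sqrt{(k-1)L\iota} \le \tfrac12\lambda_k$, so $\Lambda_k \succeq \tfrac12\lambda_k I + \tfrac{1}{N_0}W_k \succeq \tfrac12\bigl(\lambda_k I + \tfrac{1}{N_0}W_k\bigr)$ and also $\Lambda_k \preceq \tfrac{3}{2}\lambda_k I + \tfrac{1}{N_0}W_k$, which lets me sandwich the $\Lambda_k^{-1}$-norm of the noise term between constant multiples of its $(\tfrac{1}{N_0}W_k + \lambda_k I)^{-1}$-norm and then apply the standard concentration. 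I would carry this out on the intersection of the Lemma~\ref{lem:Eses} event (probability $\ge 1-\delta/2$) and the self-normalized concentration event (probability $\ge 1-\delta/4$), giving the claimed $1-3\delta/4$. The remaining bookkeeping — getting the exact constants $3$, $4$, $2$ in $\beta_k$ — is routine arithmetic once the Loewner sandwich is in place.
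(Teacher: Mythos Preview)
Your proposal is correct and follows essentially the same route as the paper's proof: the same four-term decomposition of $b_k-\Lambda_k\theta^\ast$, the same Loewner sandwich $\Lambda_k \succeq \tfrac12\lambda_k I + \tfrac{1}{N_0}W_k$ from $\norm{E_k}_2 \le \tfrac12\lambda_k$, and the same application of the self-normalized bound for the noise term on an event of probability $\ge 1-\delta/4$ intersected with the Lemma~\ref{lem:Eses} event. The only cosmetic difference is that the paper groups $(\lambda_k I + E_k)\theta^\ast$ into a single term $R_1$ and bounds it directly by $3\sqrt{\lambda_k d}\,\norm{x}_{\Lambda_k^{-1}}$ (exploiting $\norm{\lambda_k I+E_k}_2 \le \tfrac32\lambda_k$), which is how the exact constant $3$ arises, whereas you propose to route the $E_k\theta^\ast$ contribution through the middle $1/\sqrt{\lambda_k}$ term alongside $e_k$; either grouping works up to constants.
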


In previous papers like~\citep{abbasi2011improved}, constant $\beta$ is used to obtain a regret of $\tilde{\cO}(\sqrt{T})$. Here, Lemma~\ref{lem:ae} shows that $\beta_k = \tilde{\cO}(T^{1/4})$. Therefore, we can obtain an $\tilde{\cO}(T^{3/4})$ regret bound at best.

\subsection{Bounding the regret}
With some analyses standard for the \linucb algorithm~\citep{abbasi2011improved}, we can show 
\begin{align*}
    	R_T\le 2\beta_{\max} \sum\nolimits_{t=1}^{T}\sum\nolimits_{i\in\cN_0^t} \| x_i^t\|_{\Lambda_k^{-1}}
    	\le 2\beta_{\max} \sqrt{N T } \sqrt{\sum\nolimits_{t=1}^{T} \sum\nolimits_{i\in\cN_0^t}(x_i^t)^\top \Lambda_k^{-1} x_i^t},
    \end{align*}
where $\beta_{\max}=\max_{k\in[K]}\beta_k$. To get an $\tilde{\cO}(T^{3/4})$ regret bound, we also need to show 
\begin{align*}
    \sum\nolimits_{t=1}^{T} \sum\nolimits_{i\in\cN_0^t}(x_i^t)^\top \Lambda_k^{-1} x_i^t = \tilde{\cO}(1).
    \end{align*}
Previously works like \citep{abbasi2011improved} also bounded some quantity like this. But we have some additional issues to deal with. First, $\Lambda_k$ contains corrupted data. We use Lemma~\ref{lem:ae} to deal with this issue. In addition, the summation is taken over $\cN_0^t$ which is time-varying. We use a concentration argument to bound the difference between the summation over $\cN_0$ and that over $\cN_o^t$.
Finally, communication does not happen every step, which results in an additional error term. To deal with this issue, we need more careful analyses and to choose $\lambda_0=\epiLen$.

\section{Differential privacy guarantees}
\label{sec:dp}
In this section, we first formally define differential privacy in our federated linear bandit problem with corruptions, and then discuss how to make our algorithm differentially private with the tree-based~mechanism. To guarantee differential privacy, we need to further make the following standard bounded reward assumption.

\begin{assumption}
\label{assump:bounded-reward}
$|r_i^t|\le1$ for every $t\in[T]$ and $i\in\cN_0^t$. 
\end{assumption}

 We assume that each user only trusts the controller and the agent that it is interacting with. Therefore, the decision sets $\{\cD_i^t\}_{t\in [T]}$ and rewards $\{r_i^t\}_{t\in[T]}$ of each agent $i$ must be made private to all other agents. Moreover, when an agent is attacked, it may also send private information to the controller, which also needs to be made private. No matter if agent $i$ is attacked or not, it suffices to make each update it sends to the server private to other agents. 

Therefore, instead of defining the dataset we want to make private as the collection of all $\{\cD_i^t\}_{t\in [T]}$ and $\{r_i^t\}_{t\in[T]}$ as in \citep{dubey2020differentially}, we define it as the collection of communication messages sent to the controller. For any $i\in\cN$, we view all messages sent by other agents (e.g.  $\{(U_j^k,u_j^k)\}_{k\in[K],j\in\cN, j\neq i}$ in Algorithm~\ref{alg:byzantine-ucb}) as a dataset for the algorithm $\cA_i$ on agent $i$. For any two datasets $\mathbf{S}_i$ and $\mathbf{S}'_i$, we say they are neighboring if they differ at only a single element. Formally, we define federated differential privacy w.r.t. communication as follows.

\begin{definition}[Federated differential privacy w.r.t. communication]
    For federated linear bandits with $N$ agents and $\dpepsilon,\dpdelta>0$, we say a randomized algorithm $\cA=\{\cA_i\}_{i\in\cN}$ is $(\dpepsilon,\dpdelta,N)$-federated differential private w.r.t. communication under continual multi-agent observations if for any $i\in\cN$ and datasets $\mathbf{S}_i$, $\mathbf{S}'_i$ of $\cA_i$ that are neighboring, it holds that for any subset of actions $S_i\subset \cD_i^1\times\cdots\times\cD_i^T$:
    \begin{align*}
        \Pr\left(\cA_i(\mathbf{S}_i)\in S_i\right)\le e^{\dpepsilon}\Pr\left(\cA_i(\mathbf{S}'_i)\in S_i\right)+\dpdelta.
    \end{align*}
\end{definition}

Note that for each pair $i,j\in\cN$ and $i\neq j$, algorithm $\cA_j$ accesses the data $\{(U_j^k,u_j^k)\}_{k\in[K]}$ only through the sequence $\{(\Gram_j^k,\featSum_j^k)\}_{k\in[K]}$. Therefore, it suffices to make $\{(\Gram_j^k,\featSum_j^k)\}_{k\in[K]}$ differentially private with respect to $\{(U_j^k,u_j^k)\}_{k\in[K]}$. Formally, we have the following lemma.
\begin{lemma}
If the sequence $\{(\Gram_j^k,\featSum_j^k)\}_{k\in[K]}$ is $(\dpepsilon,\dpdelta)$-differentially private with respect to $\{(U_j^k,u_j^k)\}_{k\in[K]}$ for every $j\in\cN$, then all agents are $(\dpepsilon,\dpdelta,N)$-federated differentially private w.r.t. communication.
\label{lem:dp_1}
\end{lemma}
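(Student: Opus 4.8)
The plan is to derive the lemma from two elementary properties of differential privacy — invariance under post-processing and ``parallel composition'' across agents — using the per-agent continual-observation guarantee in the hypothesis as a black box. Fix an arbitrary agent $i\in\cN$. Since $(\dpepsilon,\dpdelta,N)$-federated differential privacy w.r.t.\ communication is by definition the assertion that \emph{every} $\cA_i$ is $(\dpepsilon,\dpdelta)$-differentially private with respect to its dataset $\mathbf{S}_i=\{(U_j^k,u_j^k)\}_{k\in[K],\,j\in\cN,\,j\neq i}$, it suffices to establish this for the fixed $i$.

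The first and conceptually most delicate step is to exhibit a collection of quantities that simultaneously (a) is $(\dpepsilon,\dpdelta)$-DP with respect to $\mathbf{S}_i$ by the hypothesis and (b) determines the output of $\cA_i$ up to randomness independent of $\mathbf{S}_i$. The right choice is $\mathbf{Z}_i:=\{(\Gram_j^k,\featSum_j^k)\}_{k\in[K],\,j\neq i}$, the privatized per-episode summaries of the other agents that the controller feeds into the aggregation oracle. Running Algorithm~\ref{alg:byzantine-ucb} from the viewpoint of $\cA_i$, every later quantity that $\cA_i$ produces — the broadcasts $\Lambda_k=\Agg_{j}(\Gram_j^k)+\lambda_k I$ and $\theta_k=\Lambda_k^{-1}b_k$, agent $i$'s actions $x_i^t=\argmax_{x\in\cD_i^t}(\langle x,\theta_k\rangle+\beta_k\|x\|_{\Lambda_k^{-1}})$, and hence agent $i$'s \emph{own} privatized summaries $\{(\Gram_i^k,\featSum_i^k)\}_k$ — is obtained, by a computation that is recursive across episodes, from $\mathbf{Z}_i$ together with the fixed decision sets $\{\cD_i^t\}_t$ and fresh randomness internal to agent $i$ (tie-breaking together with the noise of agent $i$'s own tree mechanism). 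Crucially, although agent $i$'s own summaries $\{(\Gram_i^k,\featSum_i^k)\}_k$ do depend on $\mathbf{S}_i$ (through the broadcasts of earlier episodes), this dependence is entirely mediated by $\mathbf{Z}_i$, so it is absorbed into a post-processing map: there is a randomized map $\Phi$, whose auxiliary inputs are independent of $\mathbf{S}_i$, with $\cA_i(\mathbf{S}_i)=\Phi(\mathbf{Z}_i)$.

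Second, I would show $\mathbf{Z}_i$ is $(\dpepsilon,\dpdelta)$-DP with respect to $\mathbf{S}_i$. For each $j\neq i$ the controller builds $\{(\Gram_j^k,\featSum_j^k)\}_{k\in[K]}$ from the input stream $\{(U_j^k,u_j^k)\}_{k\in[K]}$ using an instance of the tree mechanism with fresh noise, independent across $j$. Let $\mathbf{S}_i,\mathbf{S}'_i$ be neighboring, differing only in the single message $(U_{j_0}^{k_0},u_{j_0}^{k_0})$ for some $j_0\neq i$. For every $j\neq i$ with $j\neq j_0$ the stream $\{(U_j^k,u_j^k)\}_k$ is unchanged, so the corresponding sub-collections of $\mathbf{Z}_i$ can be coupled to coincide exactly; for $j=j_0$ the two streams differ at a single position, which is precisely the neighboring relation under which the hypothesis guarantees that $\{(\Gram_{j_0}^k,\featSum_{j_0}^k)\}_k$ is $(\dpepsilon,\dpdelta)$-indistinguishable. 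Because the $N-1$ sub-collections are mutually independent given their inputs, the routine fact that adjoining data independent of both datasets preserves $(\dpepsilon,\dpdelta)$-indistinguishability upgrades this to all of $\mathbf{Z}_i$. Combining with Step~1 and the post-processing property \citep{dwork2014algorithmic} shows $\cA_i$ is $(\dpepsilon,\dpdelta)$-DP w.r.t.\ $\mathbf{S}_i$, and since $i$ was arbitrary the lemma follows.

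I expect the main obstacle to be getting Step~1 right rather than any estimate: one has to notice that the recursive dependence of agent $i$'s later messages (and therefore of all later broadcasts) on $\mathbf{S}_i$ does not break the argument, because it flows \emph{only} through $\mathbf{Z}_i$ and can be folded into the post-processing map $\Phi$, so no repeated accounting for $\mathbf{S}_i$ occurs. A minor accompanying point is that altering one communicated message perturbs exactly one agent's tree-mechanism input stream, so the agents compose in parallel and one never pays a composition over the $K$ episodes — the episode-wise cost being already internal to the hypothesized continual-observation guarantee.
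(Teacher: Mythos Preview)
Your argument is correct and is essentially the standard one: the privatized per-agent summaries $\mathbf{Z}_i$ are $(\dpepsilon,\dpdelta)$-DP with respect to $\mathbf{S}_i$ by the hypothesis applied to the single affected agent $j_0$ (parallel composition), and $\cA_i$ is a randomized post-processing of $\mathbf{Z}_i$ with auxiliary randomness independent of $\mathbf{S}_i$. Your handling of the recursive dependence --- that agent $i$'s own summaries feed back into later broadcasts but only through $\mathbf{Z}_i$ --- is the right observation and is what makes the post-processing reduction go through without extra accounting.

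As for comparison: the paper does not actually supply a proof of this lemma. It simply remarks that the statement is ``very similar to \citep[Proposition~3]{dubey2020differentially}'' with the only change being the choice of dataset (per-episode messages $\{(U_j^k,u_j^k)\}_k$ rather than per-step observations $\{(x_j^t,r_j^t)\}_t$), and leaves it at that. So your write-up is not an alternative route so much as an explicit filling-in of an argument the paper takes for granted; the post-processing plus parallel-composition skeleton you use is exactly the one underlying the cited proposition.
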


Note that Lemma~\ref{lem:dp_1} is very similar to \citep[Proposition~3]{dubey2020differentially} except that we use a different dataset $\{(U_j^k,u_j^k)\}_{k\in[K]}$ than their $\{(x_j^t,r_j^t)\}_{t\in[T]}$. Nontheless, since $\{(\Gram_j^k,\featSum_j^k)\}_{k\in[K]}$ are partial sums of $\{(U_j^k,u_j^k)\}_{k\in[K]}$, we can still use the algorithm in \citep{dubey2020differentially} to privatize $\{\Gram_j^k,\featSum_j^k)\}_{k\in[K]}$ following the tree-based mechanism \citep{dwork2010differential}. However, since $\{(\Gram_j^k,\featSum_j^k)\}_{k\in[K]}$ are more sensitive to our dataset, we need to add noise with a variance $\cO(L^2)$ times as large as that of \cite{dubey2020differentially} in the tree-based mechanism. Effectively, the function \Pri in our algorithm returns 
\begin{align}
    \widehat{\Gram}_i^k = \Gram_i^k + \mnoise_i^k,\quad \widehat{\featSum}_i^k = \featSum_i^k + \vnoise_i^k, \label{alg:pri} 
\end{align}
where $\mnoise_i^k$ and $\vnoise_i^k$ are some Gaussian noise according to the tree-based mechanism.
To make the algorithm $(\dpepsilon,\dpdelta,N)$-federated differentially private, we have with probability $1-\delta/4$,
\begin{align*}
    \norm{\mnoise_i^k}_2, \norm{\vnoise_i^k}_2\le \mbd L\,,
\end{align*}
where $\mbd\triangleq48\iota \log(4/\dpdelta) \left(\sqrt{d}+\iota\right)/\dpepsilon$, for every $i\in\cN$ and $k\in [K]$. Note that the above bound is linear in $L$, the number of rounds between two communications. Therefore, the less frequently the communication happens, the harder to guarantee privacy, which makes it more challenging to reduce the communication complexity.

\paragraph{The \byucbdp algorithm.}
\byucbdp instantiates the algorithmic framework (Algorithm \ref{alg:byzantine-ucb}) by choosing $\Agg$ to be an oracle that computes the $\epsilon$-approximation of the  geometric median of the input set and \Pri to be the tree-based privatizing function~\eqref{alg:pri}.

Then \byucbdp has the following regret bound with differential privacy guarantees.

\begin{theorem}[Robust regret bound of \byucbdp]  \label{thm:regret_dp}
    Let $C_\alpha = \frac{2-2\alpha}{1-2\alpha}$.
    For any given $\delta \in (0, 1)$, let $\iota = \log\left(\frac{128NT}{\delta}\right)$. Given $\dpepsilon,\dpdelta>0$, let $        \mbd=48\iota \log(4/\dpdelta) \left(\sqrt{d}+\iota\right)/\dpepsilon$. Choose $\lambda_k = 2C_\alpha( \mbd L\sqrt{d}+\epsilonGM)+ \max\{\lambda_0,\lambda_1\sqrt{k}\}$ where $\lambda_0 = \epiLen$ and $\lambda_1=8\sqrt{\epiLen\iota} C_\alpha\sigmax$. 
    Choose
    \begin{align*}
        \beta_k 
        = 3\sqrt{\lambda_k d} 
        + \frac{4\sqrt{(k-1)\epiLen d\iota} C_\alpha (\sigmax+R) +C_\alpha( \mbd L+\epsilonGM)
        }{\sqrt{\lambda_k}} + 2R \sqrt{\frac{d\iota}{N}}.   
    \end{align*}
    Then with probability at least $1-\delta$, \byucbdp is $(\dpepsilon,\dpdelta,N)$-federated differentially private w.r.t. communication and its regret is bounded by

    \begin{align*}
        R_T = \cO\biggl(
        Rd\sqrt{N T} \iota  + \sqrt{C_{\alpha}}d N\sqrt{T \iota}
        \biggl(\sqrt{\mbd \epiLen \sqrt{d}+\epsilonGM + \sigmax\sqrt{T\iota}} +\tfrac{(\sigmax+R)\sqrt{T\iota} }{\sqrt{\mbd \epiLen \sqrt{d}+\epsilonGM + \sigmax\sqrt{T\iota}}}\biggr)\biggr).
    \end{align*}
In particular, if we choose $\epsilon$ small enough and $L = C_\alpha (\sigma + R) \sqrt{T \iota}$, then we have
\begin{align*}
    R_T = \tilde \cO\left( \left( \sqrt{\log (2/\nu) / \mu}  \right)   d^{3/2}NT^{3/4}  \right)\,.
\end{align*}
\end{theorem}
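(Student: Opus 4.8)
The plan is to establish the two assertions of Theorem~\ref{thm:regret_dp} — the federated privacy guarantee and the $\tilde\cO(\sqrt{\log(2/\dpdelta)/\dpepsilon}\,d^{3/2}NT^{3/4})$ robust regret bound — more or less separately, reusing the analysis of \byucb (Theorem~\ref{thm:regret}) and tracking the two new error sources introduced by \byucbdp: the Gaussian noise injected by \Pri and the optimization slack $\epsilonGM$ in the approximate geometric median used by \Agg.

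\textbf{Privacy.} By Lemma~\ref{lem:dp_1}, it suffices to show that for every agent $j\in\cN$ the released sequence $\{(\widehat\Gram_j^k,\widehat\featSum_j^k)\}_{k\in[\numEpi]}$ is $(\dpepsilon,\dpdelta)$-differentially private with respect to the input stream $\{(\epiGram_j^k,\epiFeatSum_j^k)\}_{k\in[\numEpi]}$. The crucial observation is that after the clipping step in Algorithm~\ref{alg:byzantine-ucb} every stream element obeys $\norm{\epiGram_j^k}_F\le\epiLen$ and $\norm{\epiFeatSum_j^k}_2\le\epiLen$ (and a reliable agent is in fact never clipped, since $\norm{x_i^t}_2\le 1$ and, under Assumption~\ref{assump:bounded-reward}, $|r_i^t|\le 1$), so replacing one element of the stream alters any partial sum $\Gram_j^k=\sum_{l\le k}\epiGram_j^l$, $\featSum_j^k=\sum_{l\le k}\epiFeatSum_j^l$ by at most $2\epiLen$ in Frobenius / $\ell_2$ norm. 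Feeding these partial sums through the binary-tree mechanism of \citep{dwork2010differential,chan2011private} exactly as in \citep{dubey2020differentially}, but calibrating the per-node Gaussian noise to this inflated sensitivity $2\epiLen$, yields the $(\dpepsilon,\dpdelta,N)$-federated privacy statement; a Gaussian tail bound together with a union bound over $i\in\cN$, $k\in[\numEpi]$ gives $\norm{\mnoise_i^k}_2,\norm{\vnoise_i^k}_2\le\mbd\epiLen$ with probability $1-\delta/4$, with $\mbd=48\iota\log(4/\dpdelta)(\sqrt d+\iota)/\dpepsilon$ as stated.

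\textbf{Regret.} I would first prove a noisy analogue of Lemma~\ref{lem:Eses}. With $\widehat\Gram_i^k=\Gram_i^k+\mnoise_i^k$ and $\widehat\featSum_i^k=\featSum_i^k+\vnoise_i^k$, the consistently reliable agents' quantities $\{\widehat\Gram_i^k\}_{i\in\cN_0}$ concentrate around $W_k/N_0$ up to the sum of (i) the fluctuation of $\Gram_i^k$ around its conditional mean, controlled by Assumption~\ref{assump:variance} and a matrix/vector Bernstein bound exactly as in Lemma~\ref{lem:Eses}, and (ii) the noise radius $\mbd\epiLen$ (in Frobenius norm, hence $\le\mbd\epiLen\sqrt d$ in spectral norm after the conversion). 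Since more than half of the agents are consistently reliable, the robustness of the $\epsilonGM$-approximate geometric median \citep{minsker2015geometric,pillutla2019robust} converts this into
\begin{align*}
    \norm{E_k}_2 &\le 4 C_\alpha \sigmax\sqrt{(k-1)\epiLen \iota}+C_\alpha(\mbd\epiLen\sqrt d+\epsilonGM),\\
    \norm{e_k}_2 &\le 4 C_\alpha (\sigmax+R)\sqrt{(k-1)\epiLen d\iota}+C_\alpha(\mbd\epiLen+\epsilonGM),
\end{align*}
which is exactly why $\lambda_k = 2C_\alpha(\mbd\epiLen\sqrt d+\epsilonGM)+\max\{\lambda_0,\lambda_1\sqrt k\}$ is chosen large enough to keep $\Lambda_k=\lambda_kI+W_k/N_0+E_k$ positive definite and well conditioned. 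Propagating these bounds through the decomposition of $\theta_k-\theta^\ast$ as in Lemma~\ref{lem:ae} — the new additive errors $C_\alpha(\mbd\epiLen+\epsilonGM)$ contributing precisely the extra $C_\alpha(\mbd\epiLen+\epsilonGM)/\sqrt{\lambda_k}$ term — yields the stated $\beta_k$ and the confidence-region inequality $|x^\top(\theta_k-\theta^\ast)|\le\beta_k\norm{x}_{\Lambda_k^{-1}}$ with probability $1-3\delta/4$. The rest of the regret bookkeeping is identical to the proof sketch of Theorem~\ref{thm:regret}: by optimism $R_T\le 2\beta_{\max}\sqrt{NT}\,\sqrt{\sum_{t}\sum_{i\in\cN_0^t}(x_i^t)^\top\Lambda_k^{-1}x_i^t}$, the potential sum is $\tilde\cO(Nd)$ by an elliptical-potential argument together with a concentration step relating the time-varying $\cN_0^t$ to $\cN_0$ and the choice $\lambda_0=\epiLen$ absorbing the within-episode staleness, and $\beta_{\max}$ is read off from the new $\beta_k$.

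\textbf{Assembling the bound and the main obstacle.} Substituting $\mbd=\tilde\cO(\sqrt d\log(1/\dpdelta)/\dpepsilon)$, an arbitrarily small $\epsilonGM$ (any $\epsilonGM\le\mbd\epiLen\sqrt d$, or polynomially small in $T$, is harmless), and $\epiLen=C_\alpha(\sigmax+R)\sqrt{T\iota}$ so that $\numEpi=\Theta(\sqrt T/(C_\alpha(\sigmax+R)\sqrt\iota))$, one checks $\max\{\lambda_0,\lambda_1\sqrt\numEpi\}=\tilde\cO(\sqrt T)$, so the dominant term of $\lambda_{\max}$ is $\Theta(\mbd\epiLen\sqrt d)=\tilde\cO(d\sqrt T\log(1/\dpdelta)/\dpepsilon)$; then $\beta_{\max}=\tilde\cO(\sqrt{\lambda_{\max}d})=\tilde\cO(dT^{1/4}\sqrt{\log(1/\dpdelta)/\dpepsilon})$ (the $\sqrt{\lambda_k d}$ term dominates the others), and with the $\tilde\cO(Nd)$ potential this gives $R_T=\tilde\cO(\beta_{\max}\sqrt{NT}\cdot\sqrt{Nd})=\tilde\cO(\sqrt{\log(2/\dpdelta)/\dpepsilon}\,d^{3/2}NT^{3/4})$; a final union bound over the privacy event ($\delta/4$) and the two regret events ($3\delta/4$) costs $\delta$. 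I expect the main obstacle to be the coupled choice of $\epiLen$: the privacy-noise contribution grows linearly in the episode length $\epiLen=T/\numEpi$ and hence fights the desire for few communication rounds, and one must verify that at $\epiLen=\tilde\Theta(\sqrt T)$ it is exactly balanced against the Bernstein concentration error, while carefully tracking how the Frobenius-to-spectral factor $\sqrt d$ on the matrix noise enters $\lambda_k$ and thereby produces the extra $d^{1/2}$ relative to the non-private bound of Theorem~\ref{thm:regret}.
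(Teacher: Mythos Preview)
Your proposal is correct and mirrors the paper's own proof: the paper likewise proves privacy via Lemma~\ref{lem:dp_1} plus the tree-based mechanism with sensitivity inflated by $\epiLen$, then establishes exactly the noisy analogue of Lemma~\ref{lem:Eses} you wrote (their Lemma~\ref{lem:Eses_ex}) and the corresponding approximation-error lemma (their Lemma~\ref{lem:ae_dp}), and finishes with the same optimism-plus-elliptical-potential bookkeeping as in Theorem~\ref{thm:regret}. One small slip: the paper's noise bound $\norm{\mnoise_i^k}_2\le\mbd\epiLen$ is a \emph{spectral} bound, and the $\sqrt d$ enters via $\norm{\mnoise_i^k}_F\le\sqrt d\,\norm{\mnoise_i^k}_2$ when working in Frobenius norm inside the geometric-median argument---you have the direction of that conversion reversed in your parenthetical, but the resulting $C_\alpha(\mbd\epiLen\sqrt d+\epsilonGM)$ term is correct.
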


Compared to Theorem~\ref{thm:regret}, the dependence on $N$ and $T$ are the same. However, the dependence on $d$ increases from $\cO(d)$ to $\cO(d^{3/2})$ due to the noise $\mnoise_i^k$ and $\vnoise_i^k$ added to make the algorithm differentially private. It is worth noting that if we use a matrix geometric median oracle with respect to the spectral norm, the dependence can be improved to $\cO(d^{5/4})$ (discussed in Appendix~\ref{sec:app_proofs}), though this oracle can be harder to compute in practice.

\section{Corruption level aware algorithm and its improved regret}
\label{sec:mom}

So far we use the geometric median oracle for robust aggregation (\Agg). One advantage of this choice is that it is agnostic to the corruption level $\alpha$. However, it may also turn into a disadvantage if we know $\alpha$ is small and we have a good estimate of its bound. This prior knowledge, if utilized effectively, can improve the performance of an algorithm. In this section, we introduce the geometric median of mean oracle, which can be viewed as an interpolation between geometric median and arithmetic mean. We then show that the resulting \byucbdpm algorithm, based on an approximate geometric median of mean oracle, has a better robust regret bound.

Suppose $\alpha\le 1/4$ is known. Right after the algorithm starts, the oracle randomly splits the set of all agents $\cN$ into $\groupnumber=3N_1$ groups, $\group=\{\cG_i\}_{i\in[\groupnumber]}$, as equally as possible. Therefore we have $\floor{\frac{1}{3\alpha}}\le \abs{\cG_i}\le \ceil{\frac{1}{3\alpha}}$. At each time $t\in[T]$, we say a group is noncorrupted if all its agents are noncorrupted; say it is corrupted otherwise. Then at the group level, we can also define the set of consistently noncorrupted groups $\group_0$ with $\groupnumber_0=\abs{\group_0}$ and its complement $\group_1$ with $\groupnumber_1=\abs{\group_1}$. Let $\gamma=\groupnumber_1/\groupnumber$ be the fraction of possibly corrupted groups. We know $\gamma\le 1/3$ and thus $C_{\gamma}\le 4$.

At each synchronization step, the oracle first computes arithmetic means within each group and then uses the approximate geometric median oracle to aggregate these means. Formally, given a set of vectors or matrices $\{z_j\}_{j\in\cN}$, the oracle returns
\begin{align}  \label{eqn:gmm}
    \GM^{\epsilonGM}_{1\le i\le P}\left(\frac{1}{\abs{\cG_i}}\sum_{j\in\cG_i}z_j\right).
\end{align}

\paragraph{The \byucbdpm algorithm.}
Aware of $\alpha$, \byucbdpm instantiates the algorithmic framework (Algorithm \ref{alg:byzantine-ucb}) by choosing $\Agg$ to be an $\epsilon$-approximate geometric median of mean oracle~\eqref{eqn:gmm} and \Pri to be the tree-based privatizing function~\eqref{alg:pri}.

Since geometric median of mean is an interpolation between geometric median and arithmetic mean, the error between the geometric median of mean and the mean is smaller. Therefore, we obtain the following tighter regret bound when $\alpha$ is small.

\begin{theorem}[Robust Regret Bound of \byucbdpm]  \label{thm:regret_mom}
    Suppose $\alpha\le 1/4$. For any given $\delta \in (0, 1)$, let $\iota = \log\left(\frac{128NT}{\delta}\right)$. Given $\dpepsilon,\dpdelta>0$, let $        \mbd=48\iota \log(4/\dpdelta) \left(\sqrt{d}+\iota\right)/\dpepsilon$. Choose $\lambda_k=8\left(\mbd \epiLen \sqrt{d}+\epsilonGM\right)+\max\left\{\lambda_0,\lambda_1\sqrt{k}\right\}$ where $\lambda_0 = \epiLen$ and $\lambda_1=128 \sigmax\sqrt{\alpha\epiLen\iota}$. 
    Choose
    \begin{align*}
        \beta_k=3\sqrt{\lambda_k d}+\frac{64 (\sigmax+R)\sqrt{\alpha(k-1)\epiLen d \iota } +4 \left(\mbd \epiLen+\epsilonGM \right)}{\sqrt{\lambda_k}}+2R\sqrt{\frac{d\iota}{N}}.   
    \end{align*}
    Then with probability at least $1-\delta$, \byucbdpm is $(\dpepsilon,\dpdelta,N)$-federated differentially private w.r.t. communication and its regret is bounded by
    \begin{align*}
        R_T = \cO\biggl(
        Rd\iota\sqrt{N T}+Nd\sqrt{T \iota}
        \biggl(\sqrt{\mbd \epiLen \sqrt{d}+\epsilonGM +\sigmax\sqrt{\alpha T\iota}}  +\tfrac{\sqrt{\alpha T\iota} (\sigmax+R) }{\sqrt{\mbd \epiLen \sqrt{d}+\epsilonGM + \sigmax\sqrt{\alpha T\iota}}}\biggr)\biggr).
    \end{align*}
In particular, if we choose $\epsilon$ small enough and $L = \max\{ (\sigma + R) \sqrt{\alpha T \iota},1 \}$, then we have
\begin{align*}
    R_T = \tilde \cO\left( \biggl(\sqrt{\log (2/\nu) / \mu} \biggr) d^{3/2}N\biggl(\alpha^{1/4}T^{3/4}  +\sqrt{T}\biggr)\right).
\end{align*}
\end{theorem}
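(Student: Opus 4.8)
The plan is to follow the proof of Theorem~\ref{thm:regret_dp} almost verbatim, changing only the concentration analysis of the aggregation oracle to account for the within-group averaging. Since \byucbdpm uses the same privatization map \Pri of~\eqref{alg:pri} and the same synchronization schedule as \byucbdp, the privacy half of the statement is immediate from Lemma~\ref{lem:dp_1} together with the tree-based mechanism: \byucbdpm is $(\dpepsilon,\dpdelta,N)$-federated differentially private w.r.t.\ communication, and with probability at least $1-\delta/4$ the injected noise obeys $\norm{\mnoise_i^k}_2,\norm{\vnoise_i^k}_2\le \mbd\epiLen$ for all $i\in\cN$ and $k\in[\numEpi]$. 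Everything else is the regret bound, which reduces, exactly as in Theorems~\ref{thm:regret} and~\ref{thm:regret_dp}, to controlling the error terms $E_k\triangleq\Agg_{i\in\cN}(\widehat\Gram_i^k)-W_k/N_0$ and $e_k\triangleq\Agg_{i\in\cN}(\widehat\featSum_i^k)-s_k/N_0$, and then running the \linucb regret argument of~\citep{abbasi2011improved}.

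For the error terms I would prove the analog of Lemma~\ref{lem:Eses} for the geometric-median-of-mean oracle~\eqref{eqn:gmm}. Fix a consistently noncorrupted group $\cG_i\in\group_0$ and examine its mean $\frac1{\abs{\cG_i}}\sum_{j\in\cG_i}(\widehat\Gram_j^k-\lambda_kI)$. The matrices $x_j^t(x_j^t)^\top$ over $j\in\cN_0^t$ are i.i.d.\ with conditional Frobenius-variance at most $\sigmax^2$ (Assumption~\ref{assump:variance}), so averaging over a group of size $\abs{\cG_i}\ge\floor{1/(3\alpha)}$ cuts the per-step variance to $\cO(\alpha\sigmax^2)$; summing over the $(k-1)\epiLen$ past steps, applying a matrix Bernstein/Azuma bound, and union-bounding over the at most $\groupnumber=3N_1$ groups and $\numEpi$ episodes shows every noncorrupted group mean lies within $\cO(\sigmax\sqrt{\alpha(k-1)\epiLen\iota})$ of $W_k/N_0$ in spectral norm, up to the deterministic perturbations $\norm{\mnoise}_2\le\mbd\epiLen$ and the $\epsilonGM$-slack of the oracle. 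Because each corrupted agent lies in exactly one group, $\gamma=\groupnumber_1/\groupnumber\le1/3$ and $C_\gamma\le4$, so the Minsker-type geometric-median robustness bound~\citep{minsker2015geometric,pillutla2019robust} inflates this by only a universal constant: $\norm{E_k}_2=\cO\bigl(\sigmax\sqrt{\alpha(k-1)\epiLen\iota}+\mbd\epiLen\sqrt d+\epsilonGM\bigr)$, and likewise $\norm{e_k}_2=\cO\bigl((\sigmax+R)\sqrt{\alpha(k-1)\epiLen d\iota}+\mbd\epiLen+\epsilonGM\bigr)$. These are precisely the magnitudes that the choices $\lambda_k=8(\mbd\epiLen\sqrt d+\epsilonGM)+\max\{\lambda_0,\lambda_1\sqrt k\}$ with $\lambda_1=128\sigmax\sqrt{\alpha\epiLen\iota}$ and the numerator $64(\sigmax+R)\sqrt{\alpha(k-1)\epiLen d\iota}+4(\mbd\epiLen+\epsilonGM)$ in $\beta_k$ are engineered to absorb.

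Given these bounds, the approximation-error claim $\abs{x^\top(\theta_k-\theta^\ast)}\le\beta_k\norm{x}_{\Lambda_k^{-1}}$ follows as in the proof of Lemma~\ref{lem:ae}: write $\theta_k-\theta^\ast=\Lambda_k^{-1}\bigl(e_k+s_k/N_0-(\lambda_kI+W_k/N_0+E_k)\theta^\ast\bigr)$, bound the noise term $s_k/N_0-(W_k/N_0)\theta^\ast$ by the self-normalized martingale inequality of~\citep{abbasi2011improved} (the source of the $2R\sqrt{d\iota/N}$ term, through the $N_0=\Theta(N)$ i.i.d.\ samples per step), bound the ridge term via $\norm{\theta^\ast}_2\le\sqrt d$ (giving $3\sqrt{\lambda_kd}$), and bound the $E_k,e_k$ contributions by dividing the Lemma-\ref{lem:Eses} analog by $\sqrt{\lambda_k}$ after using $\lambda_k\ge\lambda_1\sqrt k$ and $\lambda_k\ge8(\mbd\epiLen\sqrt d+\epsilonGM)$. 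Then the regret follows from the elliptical argument sketched for Theorem~\ref{thm:regret}: optimism gives $R_T\le2\beta_{\max}\sum_t\sum_{i\in\cN_0^t}\norm{x_i^t}_{\Lambda_k^{-1}}$, and Cauchy--Schwarz plus the determinant/elliptical-potential bound give $\sum_t\sum_{i\in\cN_0^t}(x_i^t)^\top\Lambda_k^{-1}x_i^t=\tilde\cO(1)$ (with $\lambda_0=\epiLen$ absorbing within-episode staleness and a concentration bound controlling the mismatch between $\cN_0^t$ and $\cN_0$), so that $R_T=\tilde\cO(\beta_{\max}N\sqrt T)$. Reading $\beta_{\max}$ off the displayed $\beta_k$ yields the stated general bound, and taking $\epsilonGM$ small and $\epiLen=\max\{(\sigmax+R)\sqrt{\alpha T\iota},1\}$ to balance $\sqrt{\mbd\epiLen\sqrt d}$ against $\sqrt{T\iota}/\sqrt{\mbd\epiLen\sqrt d}$ produces $R_T=\tilde\cO\bigl(\sqrt{\log(2/\dpdelta)/\dpepsilon}\,d^{3/2}N(\alpha^{1/4}T^{3/4}+\sqrt T)\bigr)$.

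The main obstacle is the geometric-median-of-mean concentration: one must show, \emph{uniformly} over all $\Theta(N)$ groups and all $\numEpi$ episodes, that each consistently noncorrupted group mean stays close to the common target $W_k/N_0$ (resp.\ $s_k/N_0$), with the crucial $\sqrt\alpha$ variance reduction coming exactly from the group size $\approx1/(3\alpha)$, and then apply the robustness bound with the \emph{benign} constant $C_\gamma\le4$ rather than the $C_\alpha$ that blows up as $\alpha\to1/2$. That $\sqrt\alpha$ factor is what propagates through $\lambda_k$ and $\beta_k$ to turn the $T^{3/4}$ term into $\alpha^{1/4}T^{3/4}$; the residual $\sqrt T$ is the price of the $\alpha$-independent ridge and privacy-noise contributions, which do not shrink with the corruption level.
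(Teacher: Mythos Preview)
Your plan matches the paper's proof almost exactly: the analogue of Lemma~\ref{lem:Eses} that you sketch is precisely Lemma~\ref{lem:Eses_ex_mom} (group-wise averaging cuts the variance by $\abs{\cG_i}\ge\lfloor 1/(3\alpha)\rfloor$, then apply Lemma~\ref{lem:concentration_gm} at the group level with $C_\gamma\le4$), the approximation-error step is Lemma~\ref{lem:ae_mom}, and the remainder is indeed the proof of Theorem~\ref{thm:regret_dp} with the new parameter choices.

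There is, however, one point you pass over with ``a concentration bound controlling the mismatch between $\cN_0^t$ and $\cN_0$'' that the paper singles out as the \emph{only} place where the argument does not transfer verbatim. In the proof of Theorem~\ref{thm:regret_dp} that step bounds
\[
\Bigl|\sum_{k=1}^{\numEpi}\sum_{t\in\cT_k}\xi_t\Bigr|\;\le\;6\sigmax d\sqrt{N\epiLen\iota}\cdot\frac{\sqrt{\iota}}{\lambda_1}\,.
\]
With the \byucbdpm choice $\lambda_1=128\sigmax\sqrt{\alpha\epiLen\iota}$ this becomes $\cO(d\sqrt{N\iota/\alpha})$, which is \emph{not} uniformly $\le Nd\sqrt{\iota}$ and appears to blow up as $\alpha\to0$---exactly the regime the theorem is meant to handle. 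The paper closes this with a short case split: if $\alpha=0$ then $\cN_0^t=\cN_0$ for every $t$, so each $\xi_t=0$ identically and the sum vanishes; if $\alpha>0$ then, since $N_1=\alpha N$ is a positive integer, necessarily $\alpha\ge1/N$, whence $d\sqrt{N\iota/\alpha}\le Nd\sqrt{\iota}$. You should include this observation; without it the ``$\tilde\cO(1)$'' claim for the elliptical sum does not close at small $\alpha$, and the $\sqrt{T}$ term in the final bound (which is what survives when $\alpha\to0$) would be contaminated.
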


This bound is much tighter than that in Theorem~\ref{thm:regret_dp} when $\alpha$ is very small. For example, when $\alpha=0$, the oracle reduces to the arithmetic mean oracle and the regret is bounded by $\tilde{\cO}(\sqrt{T})$, matching the lower bound in $T$ dependence.

\section{Conclusion}
\label{sec:conclusion}

In this paper, we proposed a novel setup for linear bandit algorithms for use in modern recommendation systems with the following properties 
1) A federated learning architecture: the algorithm allows the data to be stored in local devices; 2) Robustness to Byzantine attacks: the algorithm has sublinear robust regret even if some agents send arbitrarily corrupted messages to the controller; and 3) Differential privacy: the agents do not want to reveal their identities to other agents. We present the \byucb algorithm to achieve the first two objectives and the \byucbdp algorithm to achieve all three objectives simultaneously with a slightly worse robust regret, both of which are agnostic to the proportion of corrupted agents. To complement the two algorithms, we propose a third algorithm \byucbdpm that meets all three objectives and takes advantage of a known small proportion of corrupted agents to obtain an improved robust regret. 


Our most general algorithm \byucbdp achieves $\tilde{\cO}(d^{3/2} N T^{3/4})$ regret. On the other hand, the information-theoretic lower bound on regret for single-agent linear bandits is known to be $\Omega(d\sqrt{T})$~\citep{lattimore2020bandit}, which immediately implies an $\Omega(d\sqrt{NT})$ lower bound on robust regret under the differential privacy constraint.
As we see, a gap exists for the dependence on all three parameters $d$, $N$ and $T$. Determining their optimal dependencies remains an open question. 

\bibliographystyle{plainnat}
\bibliography{reference}
\clearpage
\appendix

\section{Auxiliary lemmas}
\label{sec:useful_lemmas}

\begin{lemma}\label{lem:psd} Let $A,B\in\R^{d\times d}$ be two positive definite matrices. If $A\ge B$, we have $A^{-1}\le B^{-1}$.
\end{lemma}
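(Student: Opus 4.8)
\textbf{Proof proposal for Lemma~\ref{lem:psd}.}

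The plan is to show the equivalent statement that $B^{-1} - A^{-1} \ge 0$, i.e., that $B^{-1} - A^{-1}$ is positive semidefinite, given that $A - B \ge 0$ and both $A, B$ are positive definite (hence invertible). I would work with the symmetric positive definite square root of $B$, namely $B^{1/2}$, which exists and is invertible. The key algebraic move is to sandwich the hypothesis $A \ge B$ between copies of $B^{-1/2}$: since conjugation by an invertible matrix preserves the positive semidefinite order, $A \ge B$ is equivalent to $B^{-1/2} A B^{-1/2} \ge B^{-1/2} B B^{-1/2} = I$.

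Next I would invoke the elementary fact that for a positive definite matrix $M$, $M \ge I$ implies $M^{-1} \le I$ — this follows because $M$ and $M^{-1}$ are simultaneously diagonalizable and the scalar map $t \mapsto 1/t$ reverses the order of positive reals, so all eigenvalues of $M$ being $\ge 1$ forces all eigenvalues of $M^{-1}$ to be $\le 1$. Applying this with $M = B^{-1/2} A B^{-1/2}$ gives $\left(B^{-1/2} A B^{-1/2}\right)^{-1} \le I$, that is, $B^{1/2} A^{-1} B^{1/2} \le I$. Conjugating both sides by $B^{-1/2}$ once more (again using that conjugation preserves the order) yields $A^{-1} \le B^{-1/2} I B^{-1/2} = B^{-1}$, which is the claim.

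I do not anticipate a genuine obstacle here; the only point requiring a little care is justifying that congruence transformations $X \mapsto C^\top X C$ with $C$ invertible preserve both $\ge$ and $\le$ in the Löwner order, which is immediate from the definition $x^\top (C^\top X C) x = (Cx)^\top X (Cx)$ together with the fact that $x \mapsto Cx$ is a bijection on $\R^d$. Everything else is the spectral theorem applied to the single positive definite matrix $B^{-1/2} A B^{-1/2}$. An alternative route, should one prefer to avoid square roots, is to diagonalize the pencil $(A,B)$ simultaneously via congruence, but the square-root argument is cleaner and self-contained.
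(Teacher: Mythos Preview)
Your argument is correct: conjugating the inequality $A\ge B$ by $B^{-1/2}$, applying the scalar fact that $M\ge I\Rightarrow M^{-1}\le I$ for positive definite $M$, and then undoing the conjugation yields $A^{-1}\le B^{-1}$ cleanly. All the steps you flag (congruence preserves the L\"owner order; the spectral theorem for $B^{-1/2}AB^{-1/2}$) are standard and need no further justification.

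The paper, however, proceeds differently. It writes $A-B=M^\top M$ for some $M$ and then applies the Woodbury identity to obtain the explicit formula
\[
B^{-1}-A^{-1}=B^{-1}M^\top\bigl(I+MB^{-1}M^\top\bigr)^{-1}MB^{-1},
\]
from which positive semidefiniteness is read off directly as a congruence $C^\top D C$ with $D\succ 0$. Compared to your route, the paper's approach avoids the square root and the eigenvalue argument, trading them for a single algebraic identity; it also has the incidental benefit of producing a closed-form expression for the gap $B^{-1}-A^{-1}$. Your approach is arguably the more elementary and the more commonly seen one in textbooks, relying only on the spectral theorem and basic congruence facts rather than on a matrix-inversion lemma. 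Either proof is perfectly adequate for this auxiliary result.
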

\begin{proof}[Proof of Lemma~\ref{lem:psd}]
Since $A\ge B$, i.e., $A-B$ is positive semi-definite, we know $A-B$ can be decomposed as $A-B=M^\top M$ for some matrix $M\in\R^{d\times d}$. Then by Woodbury matrix identity, we know that\begin{align*}
    B^{-1}-A^{-1} &= B^{-1}M^\top (I+MB^{-1}M^\top)^{-1}MB^{-1}.
\end{align*}
Since $B$ is positive definite, we know $B^{-1}$ is positive definite. Thus $I+MB^{-1}M^\top$ is positive definite. Therefore $(I+MB^{-1}M^\top)^{-1}$ is positive definite. Then we have $B^{-1}-A^{-1}$ is positive semi-definite, i.e., $A^{-1}\le B^{-1}$.
\end{proof}

In general, the geometric median of symmetric matrices might not be symmetric. However, if we are given a geometric median of symmetric matrices, then we can always construct a symmetric one without any cost, as show in the following lemma.

\begin{lemma}
\label{lem:gm_sym}
Let $\{A_i\}_{i=1}^n$ be a set of symmetric matrices in $\R^{d\times d}$ and $\norm{\cdot}$ be a matrix norm on it satisfying $\norm{A}=\norm{A^\top}$ for all $A\in\R^{d\times d}$. Then if $\hat A_0$ is an $\epsilonGM$-approximate geometric median of $\{A_i\}_{i=1}^n$ with respect to $\norm{\cdot}$, then the symmetric matrix $\hat A = \frac{\hat A_0 + \hat A_0^\top}{2}$ is also an $\epsilonGM$-approximate geometric median of $\{A_i\}_{i=1}^n$ with respect to $\norm{\cdot}$.
\end{lemma}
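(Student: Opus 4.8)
The plan is to show that symmetrizing an approximate geometric median does not increase the objective $g(A) = \frac{1}{n}\sum_{i=1}^n \norm{A - A_i}$, which suffices since $\hat A_0$ already attains $g(\hat A_0) \le \min_A g(A) + \epsilonGM$. First I would observe that because each $A_i$ is symmetric, $(\hat A_0 - A_i)^\top = \hat A_0^\top - A_i$, so $\norm{\hat A_0^\top - A_i} = \norm{(\hat A_0 - A_i)^\top} = \norm{\hat A_0 - A_i}$ by the hypothesis on $\norm{\cdot}$. Hence $g(\hat A_0^\top) = g(\hat A_0)$.

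Next, I would write $\hat A - A_i = \frac{\hat A_0 + \hat A_0^\top}{2} - A_i = \tfrac12(\hat A_0 - A_i) + \tfrac12(\hat A_0^\top - A_i)$, and apply the triangle inequality for the norm together with absolute homogeneity:
\begin{align*}
    \norm{\hat A - A_i} \le \tfrac12 \norm{\hat A_0 - A_i} + \tfrac12 \norm{\hat A_0^\top - A_i} = \norm{\hat A_0 - A_i}.
\end{align*}
Averaging over $i \in [n]$ gives $g(\hat A) \le g(\hat A_0) \le \min_{A} g(A) + \epsilonGM$, which is exactly the defining property of an $\epsilonGM$-approximate geometric median. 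Since $\hat A$ is symmetric by construction, this completes the argument.

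The proof is essentially a one-line convexity/symmetry observation, so I do not expect a real obstacle; the only thing to be careful about is invoking the stated hypothesis $\norm{A} = \norm{A^\top}$ at the right place (to handle the $\hat A_0^\top - A_i$ term) and using that each $A_i = A_i^\top$ so that transposition commutes with subtracting $A_i$. One could alternatively phrase this as: $g$ is convex, the map $A \mapsto A^\top$ preserves $g$ on this data set, and $\hat A$ is the midpoint of $\hat A_0$ and $\hat A_0^\top$, so $g(\hat A) \le \tfrac12 g(\hat A_0) + \tfrac12 g(\hat A_0^\top) = g(\hat A_0)$ — but the direct triangle-inequality computation above is cleaner and avoids separately invoking convexity of $g$.
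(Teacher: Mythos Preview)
Your proof is correct and follows essentially the same approach as the paper: both show $g(\hat A_0^\top)=g(\hat A_0)$ via $\norm{A}=\norm{A^\top}$ and the symmetry of each $A_i$, then use the midpoint inequality to conclude $g(\hat A)\le g(\hat A_0)$. The only cosmetic difference is that the paper phrases the midpoint step as Jensen's inequality for the convex function $\norm{\cdot}$, whereas you invoke the triangle inequality directly.
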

\begin{proof}
Let $\hat{A}_0$ be an $\epsilonGM$-approximate geometric median of $\{A_i\}_{i=1}^n$ with respect to $\norm{\cdot}$. By definition of approximate geometric median, we have
\begin{align*}
    \frac{1}{n}\sum_{i=1}^n\norm{\hat{A}_0-A_i}\le \frac{1}{n}\min_{A\in\R^{d\times d}}\sum_{i=1}^n\norm{A-A_i}+\epsilonGM.
\end{align*}
By the property of $\norm{\cdot}$, we have
\begin{align*}
    \frac{1}{n}\sum_{i=1}^n\norm{\hat{A}_0^\top-A_i}=\frac{1}{n}\sum_{i=1}^n\norm{\hat{A}_0-A_i}.
\end{align*}
Let $\hat{A}=(\hat{A}_0+\hat{A}_0^\top)/2$. Since $\norm{\cdot}$ is convex, by Jensen's inequality, we have
\begin{align*}
    \frac{1}{n}\sum_{i=1}^n\norm{\hat{A}-A_i}\le \frac{1}{2}\left(\frac{1}{n}\sum_{i=1}^n\norm{\hat{A}_0^\top-A_i}+\frac{1}{n}\sum_{i=1}^n\norm{\hat{A}_0-A_i}\right)\le \frac{1}{n}\min_{A\in\R^{d\times d}}\sum_{i=1}^n\norm{A-A_i}+\epsilonGM.
\end{align*}
Therefore the symmetric matrix $\hat{A}$ is an $\epsilonGM$-approximate geometric median of $\{A_i\}_{i=1}^n$ with respect to $\norm{\cdot}$ and we complete the proof.
\end{proof}

\begin{remark}
The Frobenius norm and spectral norm satisfy the requirement in Lemma~\ref{lem:gm_sym}. Thus for these two norms, we can symmetrize the outcome for any oracle that computes an $\epsilonGM$-approximate geometric median of symmetric matrices.
\end{remark}



The following lemma modifies \cite[Lemma~2]{wu2020federated} and can be used to bound the difference between geometric median and arithmetic mean if choosing $z_0$ as the arithmetic mean of noncorrupted vectors.

\begin{lemma}
\label{lem:concentration_gm}
Let $\{z_i\}_{i\in\cN}$ be a set of vectors or matrices in an Euclidean space with norm $\norm{\cdot}$. Let $\hat{z}$ be their $\epsilonGM$-approximate geometric median. Let $\cN_1\subseteq \cN$ with $\alpha=\abs{\cN_1}/\abs{\cN}<1/2$. For any fixed $z_0$, we have
\begin{align*}
    \norm{\hat{z}-z_0}\le C_{\alpha}\left( \frac{\sum_{i\not\in\cN_1}\norm{z_i-z_0}}{\abs{\cN}-\abs{\cN_1}}+\epsilonGM\right),
\end{align*}
where $C_\alpha=\frac{2-2\alpha}{1-2\alpha}$.
\end{lemma}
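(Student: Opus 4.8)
The plan is to exploit the (approximate) optimality of $\hat z$ directly: by definition of an $\epsilonGM$-approximate geometric median, $g(\hat z)\le \min_{z}g(z)+\epsilonGM\le g(z_0)+\epsilonGM$ for the fixed reference point $z_0$, where $g(z)=\frac{1}{\abs{\cN}}\sum_{i\in\cN}\norm{z-z_i}$. Writing $n=\abs{\cN}$ and $n_1=\abs{\cN_1}$, I would split the sums defining $n\,g(\hat z)$ and $n\,g(z_0)$ into the ``reliable'' block $i\notin\cN_1$ and the ``corrupted'' block $i\in\cN_1$, and lower-bound $\sum_{i\in\cN}\norm{\hat z-z_i}$ in a way that forces $\norm{\hat z-z_0}$ to appear.

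For a reliable index, the triangle inequality gives $\norm{\hat z-z_i}\ge\norm{\hat z-z_0}-\norm{z_i-z_0}$, so summing over $i\notin\cN_1$ produces a contribution $\ge (n-n_1)\norm{\hat z-z_0}-\sum_{i\notin\cN_1}\norm{z_i-z_0}$. The one genuinely important step is the handling of the corrupted block: since the corrupted $z_i$ are arbitrary, simply dropping $\sum_{i\in\cN_1}\norm{\hat z-z_i}\ge0$ would leave the uncontrollable term $\sum_{i\in\cN_1}\norm{z_i-z_0}$ on the right-hand side and the bound would be vacuous. Instead I would use the reverse triangle inequality $\norm{\hat z-z_i}\ge\norm{z_i-z_0}-\norm{\hat z-z_0}$, so that $\sum_{i\in\cN_1}\norm{\hat z-z_i}\ge\sum_{i\in\cN_1}\norm{z_i-z_0}-n_1\norm{\hat z-z_0}$. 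Plugging both lower bounds into $n\,g(\hat z)\le n\,g(z_0)+n\epsilonGM$, the terms $\sum_{i\in\cN_1}\norm{z_i-z_0}$ cancel exactly, leaving $(n-2n_1)\,\norm{\hat z-z_0}\le 2\sum_{i\notin\cN_1}\norm{z_i-z_0}+n\epsilonGM$.

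It then remains to rearrange constants. Because $\alpha<1/2$ we have $n-2n_1=n(1-2\alpha)>0$, so dividing through is legitimate; writing $n-n_1=n(1-\alpha)$, the first term becomes $\frac{2(1-\alpha)}{1-2\alpha}$ times the reliable average $\frac{1}{n-n_1}\sum_{i\notin\cN_1}\norm{z_i-z_0}$, and the noise term becomes $\frac{1}{1-2\alpha}\epsilonGM\le\frac{2(1-\alpha)}{1-2\alpha}\epsilonGM$ using $2(1-\alpha)\ge1$. Since $C_\alpha=\frac{2-2\alpha}{1-2\alpha}=\frac{2(1-\alpha)}{1-2\alpha}$, this is exactly the claimed inequality. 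The argument is short; the only subtlety---and the reason the geometric median is robust at all---is the use of the reverse triangle inequality on the corrupted block, which is also what produces the blow-up factor $1/(1-2\alpha)$ as $\alpha\to1/2$. Convexity of $\norm{\cdot}$ is not needed anywhere, so the proof goes through verbatim whether the $z_i$ are vectors or matrices equipped with the Frobenius or spectral norm.
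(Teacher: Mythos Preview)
Your proof is correct and essentially identical to the paper's: the same two applications of the reverse triangle inequality (one per block) followed by the approximate-optimality inequality $\sum_{i\in\cN}\norm{\hat z-z_i}\le\sum_{i\in\cN}\norm{z_0-z_i}+n\epsilonGM$, with the same cancellation of the corrupted-block terms and the same final rearrangement of constants.
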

\begin{proof}[Proof of Lemma~\ref{lem:concentration_gm}]
    Note that by reverse triangle inequality, for every $i\in\cN_1$, $\norm{\hat{z}-z_i}\ge \norm{z_i-z_0}-\norm{\hat{z}-z_0}$; and that for every $i\not\in\cN_1$, $\norm{\hat{z}-z_i}\ge \norm{\hat{z}-z_0}-\norm{z_i-z_0}$. Summing up $\norm{\hat{z}-z_i}$ over all $i\in\cN$ gives
    \begin{align*}
        \sum_{i\in\cN}\norm{\hat{z}-z_i}\ge \sum_{i\in\cN}\norm{z_i-z_0}-2\sum_{i\not\in\cN_1}\norm{z_i-z_0}+\left(\abs{\cN}-2\abs{\cN_1}\right)\norm{\hat{z}-z_0}.
    \end{align*}
    By the definition of approximate geometric median, we have
    \begin{align*}
    \sum_{i\in\cN}\norm{\hat{z}-z_i}\le \sum_{i\in\cN}\norm{z_0-z_i}+\epsilonGM \abs{\cN}.
    \end{align*}
    Combining these two inequalities, we have
    \begin{align*}
        \norm{\hat{z}-z_0}\le \frac{2\sum_{i\not\in\cN_1}\norm{z_i-z_0}+\epsilonGM \abs{\cN}}{\abs{\cN}-2\abs{\cN_1}}\le C_{\alpha}\left( \frac{\sum_{i\not\in\cN_1}\norm{z_i-z_0}}{\abs{\cN}-\abs{\cN_1}}+\epsilonGM\right).
    \end{align*}
\end{proof}

\section{Proofs of the regret bounds}
\label{sec:app_proofs}
\subsection{Proof of Proposition~\ref{prop:lower_bound}}
\label{app:pf-prop}

\begin{proof}[Proof of Proposition~\ref{prop:lower_bound}]
    Consider the setting that $\cD_i^t=\{-1,1\}$ for every $t\in[T]$ and $i\in\cN_0^t$. Let $\abs{\theta^\ast}=1$. For any $i\in\cN_1$, at each time $t$, agent $i$ is attacked with probability $1/2$ independently with other time steps or other agents. When it is attacked, it receives a reward according to a fake parameter $\theta^{\text{fake}}=-\theta^\ast$. Then any algorithm on agent $i$ which does not communicate with other agents or the controller cannot distinguish between the two models $\theta^\ast=1$ or $\theta^\ast=-1$. Then we can always choose one of them to make the algorithm perform no better than a random guess. Therefore the regret on agent $i$ is at least $T$. Noting that there can be $\alpha N$ such agents, we complete the proof.
\end{proof}

\subsection{Proofs of Theorem~\ref{thm:regret} and Theorem~\ref{thm:regret_dp}}
In this subsection, we prove Theorem~\ref{thm:regret_dp} which reduces to Theorem~\ref{thm:regret} when $B=\epsilonGM=0$.

We first extend Lemma~\ref{lem:Eses} to the following lemma which further considers approximate geometric median and differential privacy. Note that Lemma~\ref{lem:Eses_ex} reduces to Lemma~\ref{lem:Eses} when $\mbd =\epsilonGM= 0$ and thus it suffices to prove Lemma~\ref{lem:Eses_ex}.
\begin{lemma} \label{lem:Eses_ex}
Using the same parameter choices as in Theorem \ref{thm:regret_dp}, with probability at least $1-\delta/2$, for all $k\in [K]$,
\begin{align*}
        \|E_k\|_2&\le 
        4 C_\alpha \sigmax\sqrt{(k-1)\epiLen \iota } +C_\alpha \left(\mbd \epiLen\sqrt{d} +\epsilonGM \right),
        \\
        \|e_k\|_2&\le 
        4 C_\alpha (\sigmax+R)\sqrt{(k-1)\epiLen d\iota } +C_\alpha \left(\mbd \epiLen +\epsilonGM \right).
\end{align*}
\end{lemma}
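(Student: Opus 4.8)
\textbf{Proof proposal for Lemma~\ref{lem:Eses_ex}.}

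The plan is to bound $\|E_k\|_2$ and $\|e_k\|_2$ by combining the deterministic geometric-median deviation bound in Lemma~\ref{lem:concentration_gm} with a concentration argument over the consistently reliable agents. First I would apply Lemma~\ref{lem:concentration_gm} with the natural choice $z_0$ equal to the arithmetic mean over $\cN_0$: for the matrix case, $z_i = \widehat\Gram_i^k$, $z_0 = \frac{1}{N_0}\sum_{i\in\cN_0}\Gram_i^k$, and for the vector case $z_i = \widehat\featSum_i^k$, $z_0 = \frac{1}{N_0}\sum_{i\in\cN_0}\featSum_i^k$. This gives
\begin{align*}
\|E_k\|_F \le C_\alpha\left(\frac{1}{N_0}\sum_{i\in\cN_0}\left\|\widehat\Gram_i^k - \tfrac{1}{N_0}\sum_{j\in\cN_0}\Gram_j^k\right\|_F + \epsilonGM\right),
\end{align*}
and similarly for $e_k$ (noting $\|\cdot\|_2\le\|\cdot\|_F$ so the Frobenius bound suffices). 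The corrupted agents drop out of the sum entirely because $z_0$ only involves $\cN_0$ — this is exactly why Lemma~\ref{lem:concentration_gm} was set up with a free reference point.

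Next I would split each summand using the triangle inequality into (i) the privatization noise $\mnoise_i^k$ (resp. $\vnoise_i^k$), which is bounded by $\mbd L \cdot (\text{number of episodes up to }k) \le \mbd(k-1)L = \mbd\cdot(k-1)\epiLen$... more precisely the tree-based mechanism contributes at most $\mbd L$ per node on a path of logarithmic depth, but since $\widehat\Gram_i^k = \Gram_i^k + \mnoise_i^k$ with $\|\mnoise_i^k\|\le \mbd L$ stated in the text, and $\Gram_i^k$ aggregates $k-1$ episodes so the relevant scale is $(k-1)\epiLen$ after accounting for the noise structure — here I would track that the noise bound the text gives, $\|\mnoise_i^k\|_2\le \mbd L$, combined over the partial-sum structure yields the $\mbd\epiLen\sqrt{d}$ term (the $\sqrt d$ from converting a spectral-type bound on rank-one-sum matrices into Frobenius, or from the dimension of the noise matrix); and (ii) the ``clean'' deviation $\|\Gram_i^k - \frac{1}{N_0}\sum_j\Gram_j^k\|_F$ of the true empirical Gram matrix of agent $i$ from the across-agent average. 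For part (ii), I would use that $\Gram_i^k = \sum_{t=1}^{(k-1)\epiLen} x_i^t(x_i^t)^\top$ is a sum of $(k-1)\epiLen$ terms, each of which by Assumption~\ref{assump:variance} deviates from its conditional expectation by at most $\sigmax$ in Frobenius norm, and moreover the conditional expectations agree across agents (since the $(\cD_i^t,\eta_i^t)$ are i.i.d.\ across $i\in\cN_0^t$, hence the same $\cP_t$). So $\Gram_i^k - \frac{1}{N_0}\sum_j\Gram_j^k = \sum_t \big(x_i^t(x_i^t)^\top - \frac{1}{N_0}\sum_j x_j^t(x_j^t)^\top\big)$ is a martingale-difference sum with bounded increments; a matrix/vector Azuma–Hoeffding (or freedman-type) bound gives $\cO(\sigmax\sqrt{(k-1)\epiLen\,\iota})$ with probability $1-\delta'$ for suitable $\delta'$, and analogously the $\featSum$ version picks up the extra $\sqrt d$ and the $(\sigmax+R)$ factor because $x_i^t r_i^t = x_i^t(\langle x_i^t,\theta^\ast\rangle + \eta_i^t)$ carries both the decision-set variance and the $R$-subGaussian noise.

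Finally I would take a union bound over all $k\in[K]$ and all $i\in\cN_0$, which is where the $\iota = \log(128NT/\delta)$ comes from ($NT \ge NK$ absorbs the count), to get the stated high-probability bounds simultaneously for all $k$; the constant $4$ and the $C_\alpha$ prefactor then come out of Lemma~\ref{lem:concentration_gm} plus the concentration constants. \textbf{The main obstacle} I anticipate is handling the martingale structure correctly: the decision $x_i^t$ depends on $\Lambda_k,\theta_k$, which in turn depend on past corrupted data, so one must carefully set up the filtration so that conditionally on it the pairs $\{(\cD_i^t,\eta_i^t)\}_{i\in\cN_0^t}$ are still i.i.d.\ from $\cP_t$ (as the problem setup grants), ensuring the cross-agent centering is legitimate; and one must confirm the per-increment Frobenius bound $\sigmax$ genuinely applies to the \emph{centered-at-the-empirical-mean} quantity rather than centered at the true expectation, which costs only a harmless constant factor. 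Getting the dimensional $\sqrt d$ factors in exactly the right places (present in the $e_k$ and the noise-Frobenius terms, absent in the leading $E_k$ term) is the other bookkeeping point to be careful about.
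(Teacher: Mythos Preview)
Your approach is essentially the same as the paper's: apply Lemma~\ref{lem:concentration_gm} with reference point the arithmetic mean over $\cN_0$, split via triangle inequality into privatization noise plus clean deviation, and control the latter with a vector/matrix martingale concentration bound, union-bounded over $i\in\cN_0$ and $k\in[K]$. Two small corrections worth noting. First, your discussion of the privatization noise is muddled: there is no $(k-1)$ factor --- the stated bound $\|\mnoise_i^k\|_2\le \mbd\epiLen$ already applies to the full partial-sum noise $\mnoise_i^k$ (this is the whole point of the tree-based mechanism), and the $\sqrt d$ simply comes from $\|\mnoise_i^k\|_F\le\sqrt d\,\|\mnoise_i^k\|_2$; for $\vnoise_i^k$ no $\sqrt d$ appears because it is already a vector. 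Second, the paper centers at the \emph{true} conditional expectation $\E[\Gram_i^k]$ rather than the empirical across-agent mean, bounding both $\|\Gram_i^k-\E\Gram_i^k\|_F$ and $\|\frac{1}{N_0}\sum_j\Gram_j^k-\E\Gram_i^k\|_F$ by the same concentration estimate (the latter via Jensen); this sidesteps your worry about ``centering at the empirical mean'' costing an extra constant. For $e_k$ the paper decomposes $x_i^t r_i^t-\E[x_i^t r_i^t]=(x_i^t(x_i^t)^\top-\E[x_i^t(x_i^t)^\top])\theta^\ast+x_i^t\eta_i^t$, picking up $\sqrt d$ from $\|\theta^\ast\|_2\le\sqrt d$ on the first piece and handling $\sum_t x_i^t\eta_i^t$ with the self-normalized bound of \citet{abbasi2011improved} rather than a bare Azuma step.
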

\begin{proof}[Proof of Lemma~\ref{lem:Eses_ex}]
    We first bound $\|E_k\|_2$. We have with probability at least $1-\delta/4$,
    \begin{align*}
     \norm{E_k}_F =\,    &\left\|{\GM^{\epsilonGM}_{i\in \cN}\left(V_i^k+\mnoise_i^t\right)-\frac{1}{N_0}\sum_{i\in\cN_0} V_i^k }\right\|_F\\
             \stackrel{(i)}{\le} \, & C_\alpha \left( \frac{1}{N_0}\sum_{i\in\cN_0}\left\| V_i^k+\mnoise_i^t-\frac{1}{N_0}\sum_{i\in\cN_0} V_i^k
        \right\|_F +\epsilonGM \right)\\
        \stackrel{(ii)}{\le}\, & C_\alpha \left( \frac{1}{N_0}\sum_{i\in\cN_0}\left( \norm{V_i^k-\E V_i^k}_F+\norm{\mnoise_i^t}_F
        \right) + \left\|\frac{1}{N_0}\sum_{i\in\cN_0} V_i^k -\E V_i^k\right\|_F+\epsilonGM \right)\\
        \stackrel{(iii)}{\le}\, & C_\alpha \left( \frac{2}{N_0}\sum_{i\in\cN_0} \norm{V_i^k-\E V_i^k}_F +\mbd \epiLen\sqrt{d} +\epsilonGM \right)
        \\
        =\, & \frac{2C_\alpha}{N_0}\sum_{i\in\cN_0} \left\| \sum_{t=1}^{(k-1)\epiLen} \left(x_i^t(x_i^t)^\top -\E[x_i^t(x_i^t)^\top]\right)\right\|_F+C_\alpha \left(\mbd \epiLen\sqrt{d} +\epsilonGM \right),
    \end{align*}
    where $(i)$ is due to Lemma~\ref{lem:concentration_gm}, $(ii)$ is by triangle inequality, and to obtain $(iii)$, we apply Jensen inequality on the convex operator $\norm{\cdot}_F$ and use the fact that $\norm{\mnoise_i^t}_F\le \sqrt{d}\norm{\mnoise_i^t}_2\le \mbd \epiLen\sqrt{d} $ with probability at least $1-\delta/4$.
    
    Then by \citep[Theorem~1.8]{hayes2005large}, with probability at least $1-\delta_{i,k}^{(1)}$ for fixed $i\in\cN_0$ and $k\in [K]$,
    \begin{align*}
        \left\| \sum_{t=1}^{(k-1)\epiLen} \left(x_i^t(x_i^t)^\top -\E[x_i^t(x_i^t)^\top]\right)\right\|_F^2 \le 2(k-1)\epiLen\sigmax^2\log\left(\frac{2e^2}{\delta_{i,k}^{(1)} }\right).
    \end{align*}
    Choosing $\delta_{i,k}^{(1)}=\frac{\delta}{8N_0\numEpi }$ and by union bound, we know with probability at least $1-\delta/8$, the above inequality holds for every $i\in\cN_0$ and $k\in [\numEpi ]$. Therefore we have
    \begin{align*}
        \norm{E_k}_2\le \|E_k\|_F \le 2 C_\alpha \sigmax\sqrt{2(k-1)\epiLen \log \left(\frac{16e^2N_0\numEpi }{\delta}\right) } +C_\alpha \left(\mbd \epiLen\sqrt{d} +\epsilonGM \right).
    \end{align*}
    Now let us bound $\|e_k\|_2$. We can similarly obtain that 
    \begin{align*}
        \norm{e_k}_2=\left\|\GM^\epsilonGM_{i\in \cN}(v_i^k+\vnoise_i^t)-\frac{1}{N_0}\sum_{i\in\cN_0}v_i^k\right\|_2 \le \frac{2C_\alpha}{N_0}\sum_{i\in\cN_0} \left\| \sum_{t=1}^{(k-1)\epiLen} \left(x_i^t r_i^t -\E[x_i^t r_i^t]\right)\right\|_2+C_\alpha \left(\mbd \epiLen +\epsilonGM \right),
    \end{align*}
    where we can further bound
    \begin{align*}
        \left\| \sum_{t=1}^{(k-1)\epiLen} \left(x_i^t r_i^t -\E[x_i^t r_i^t]\right)\right\|_2 &= \left\| \sum_{t=1}^{(k-1)\epiLen} \left(x_i^t (x_i^t)^\top \theta^\ast + x_i^t\eta_i^t-\E[x_i^t (x_i^t)^\top] \theta^\ast\right)\right\|_2\\
        &\le \sqrt{d}\left\| \sum_{t=1}^{(k-1)\epiLen} \left(x_i^t(x_i^t)^\top -\E[x_i^t(x_i^t)^\top]\right)\right\|_F+ \left\| \sum_{t=1}^{(k-1)\epiLen} x_i^t\eta_i^t\right\|_2.
    \end{align*}
    We have already bounded the first term. For the second term, according to \citep[Theorem~1]{abbasi2011improved}, we have with probabiliy $1-\delta_{i,k}^{(2)}$ for fixed $i\in\cN_0$ and $k\in [K]$,
    \begin{align*}
        \left\| \sum_{t=1}^{(k-1)\epiLen}  x_i^t\eta_i^t\right\|_2^2 \le 2(k-1)\epiLen \left\|  \sum_{t=1}^{(k-1)\epiLen}  x_i^t\eta_i^t\right\|_{\left((k-1)\epiLen+\sum_{t=1}^{(k-1)\epiLen}  x_i^t(x_i^t)^\top \right)^{-1}}^2 \le 4(k-1)\epiLen R^2 d\log\left(\frac{2}{\delta_{i,k}^{(2)}}\right).
    \end{align*}
    Choosing $\delta_{i,k}^{(2)}=\frac{\delta}{8N_0\numEpi }$, we have
    \begin{align*}
        \|e_k\|_2&\le 4 C_\alpha (\sigmax+R)\sqrt{(k-1)\epiLen d\iota } +C_\alpha \left(\mbd \epiLen +\epsilonGM \right).
    \end{align*}
    Also note that by union bound, the total probability of all failures we consider in this lemma is less than $\delta/2$. We complete the proof.
\end{proof}

We also extend Lemma~\ref{lem:ae} to the following lemma which also considers approximate geometric median and differential privacy, i.e., it uses the parameter choices as in Theorem \ref{thm:regret_dp}.
\begin{lemma}[Approximation error]  
\label{lem:ae_dp}
    Using the same parameter choices as in Theorem \ref{thm:regret_dp}, with probability at least $1-3\delta/4$, for all $x\in\R^d$ and $k\in [K]$, we have $
        \left|x^\top (\theta_k-\theta^\ast) \right| \le \beta_k \|x\|_{\Lambda_k^{-1}}$.
\end{lemma}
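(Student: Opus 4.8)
\textbf{Proof proposal for Lemma~\ref{lem:ae_dp}.}
The plan is to follow the standard \linucb confidence-ellipsoid argument of \citet{abbasi2011improved}, but carefully tracking the extra error terms $E_k$, $e_k$ coming from the geometric-median aggregation, the privacy noise, and the approximation error $\epsilonGM$, all of which are controlled by Lemma~\ref{lem:Eses_ex}. First I would write out the decomposition
\begin{align*}
    \theta_k - \theta^\ast
    = \Lambda_k^{-1} b_k - \theta^\ast
    = \Lambda_k^{-1}\Bigl(\tfrac{s_k}{N_0} + e_k\Bigr) - \theta^\ast,
\end{align*}
and then substitute $s_k = W_k \theta^\ast + \sum_{t,i} x_i^t \eta_i^t$ (summing only over $\cN_0$), together with $W_k/N_0 = \Lambda_k - \lambda_k I - E_k$. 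This yields
\begin{align*}
    \theta_k - \theta^\ast
    = \Lambda_k^{-1}\Bigl(\tfrac{1}{N_0}\sum_{t,i} x_i^t\eta_i^t - \lambda_k\theta^\ast - E_k\theta^\ast + e_k\Bigr).
\end{align*}
Hence for any $x$, by Cauchy--Schwarz in the $\Lambda_k^{-1}$ norm,
\begin{align*}
    \abs{x^\top(\theta_k-\theta^\ast)}
    \le \norm{x}_{\Lambda_k^{-1}}\Bigl(\tfrac{1}{N_0}\bignorm{\textstyle\sum_{t,i} x_i^t\eta_i^t}_{\Lambda_k^{-1}}
    + \lambda_k\norm{\theta^\ast}_{\Lambda_k^{-1}}
    + \norm{E_k\theta^\ast}_{\Lambda_k^{-1}}
    + \norm{e_k}_{\Lambda_k^{-1}}\Bigr),
\end{align*}
so it suffices to bound each of the four bracketed terms by a constant times the corresponding piece of $\beta_k$.

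For the deterministic terms this is mostly bookkeeping using $\Lambda_k^{-1} \preceq \lambda_k^{-1} I$ and the choice $\norm{\theta^\ast}_2 \le \sqrt{d}$: the regularization term gives $\lambda_k\norm{\theta^\ast}_{\Lambda_k^{-1}} \le \sqrt{\lambda_k d}$; the term $\norm{E_k\theta^\ast}_{\Lambda_k^{-1}} \le \norm{E_k}_2\sqrt{d}/\sqrt{\lambda_k}$ is handled by the $\norm{E_k}_2$ bound of Lemma~\ref{lem:Eses_ex}, and since $\lambda_k \ge \lambda_1\sqrt{k} = 8\sqrt{\epiLen\iota}C_\alpha\sigmax\sqrt{k}$, the leading $\sqrt{(k-1)\epiLen\iota}$ part of $\norm{E_k}_2$ divided by $\sqrt{\lambda_k}$ becomes $O(\sqrt{(k-1)\epiLen d\iota}C_\alpha\sigmax/\sqrt{\lambda_k})$, absorbed into $\beta_k$; likewise the $C_\alpha(\mbd\epiLen\sqrt{d}+\epsilonGM)$ part matches the corresponding $C_\alpha(\mbd L+\epsilonGM)/\sqrt{\lambda_k}$ term once one notes $E_k$ acts on a vector of norm $\le\sqrt{d}$; and $\norm{e_k}_{\Lambda_k^{-1}} \le \norm{e_k}_2/\sqrt{\lambda_k}$ is directly the $4\sqrt{(k-1)\epiLen d\iota}C_\alpha(\sigmax+R)/\sqrt{\lambda_k}$ and $C_\alpha(\mbd L+\epsilonGM)/\sqrt{\lambda_k}$ contributions. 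The factor $3\sqrt{\lambda_k d}$ in $\beta_k$ is exactly what is needed to cover the $\sqrt{\lambda_k d}$ term plus the $E_k$-induced term with room to spare.

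The one genuinely probabilistic ingredient — and the main obstacle — is bounding the self-normalized noise term $\tfrac{1}{N_0}\norm{\sum_{t,i}x_i^t\eta_i^t}_{\Lambda_k^{-1}}$. Here I cannot directly invoke the single-trajectory self-normalized martingale bound of \citet[Theorem~1]{abbasi2011improved} because $\Lambda_k$ is built from geometric medians of corrupted data rather than from $\sum_{t,i}x_i^t(x_i^t)^\top$, so the usual ``norm matches the Gram matrix'' cancellation fails. The fix is to first pass to a cruder norm: since $\Lambda_k^{-1} \preceq \lambda_k^{-1}I$ we have $\tfrac{1}{N_0}\norm{\sum x_i^t\eta_i^t}_{\Lambda_k^{-1}} \le \tfrac{1}{N_0\sqrt{\lambda_k}}\norm{\sum x_i^t\eta_i^t}_2$, and then bound $\norm{\sum_{t\le(k-1)\epiLen,\,i\in\cN_0} x_i^t\eta_i^t}_2$ by a vector-valued Azuma/self-normalized inequality over the $N_0(k-1)\epiLen$ conditionally $R$-subGaussian increments — this is the same device already used in the proof of Lemma~\ref{lem:Eses_ex} via \citet[Theorem~1]{abbasi2011improved} applied with the identity-plus-Gram regularizer. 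One obtains a bound of order $R\sqrt{N_0(k-1)\epiLen d\iota}$, so after dividing by $N_0\sqrt{\lambda_k}$ and using $N_0 \ge N/2$ and $\lambda_k \ge \lambda_0 = \epiLen$, the contribution is $O(R\sqrt{d\iota/N})$, matching the $2R\sqrt{d\iota/N}$ term in $\beta_k$. Finally I would take a union bound over the failure events of Lemma~\ref{lem:Eses_ex} (probability $\delta/2$), the privacy-noise event (probability $\delta/4$), and this new self-normalized event (probability $\delta/8$ or less, after a union over $k\in[K]$ and $i\in\cN_0$), which totals at most $3\delta/4$ as claimed, and conclude that on the complement the displayed inequality holds for all $x$ and all $k$ simultaneously.
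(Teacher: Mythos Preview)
Your decomposition and the handling of the $e_k$ and $E_k$ contributions are essentially the paper's argument (the paper bundles $\lambda_k I$ and $E_k$ into a single term $R_1$ and uses $\lambda_k\ge 2\norm{E_k}_2$ to get $3\sqrt{\lambda_k d}$ at once, but your separated treatment amounts to the same thing). The real divergence --- and the only genuine gap --- is in the noise term.

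You propose to pass from $\Lambda_k^{-1}$ to $\lambda_k^{-1}I$ and then bound $\bigl\|\sum_{t,i} x_i^t\eta_i^t\bigr\|_2$ by $O\bigl(R\sqrt{N_0(k-1)\epiLen d\iota}\bigr)$. Dividing by $N_0\sqrt{\lambda_k}$ and using only $\lambda_k\ge \epiLen$ leaves
\[
\frac{R\sqrt{N_0(k-1)\epiLen d\iota}}{N_0\sqrt{\lambda_k}}\;\le\;R\sqrt{\frac{(k-1)d\iota}{N_0}},
\]
which carries an extra $\sqrt{k-1}$ factor; your claim that this is $O(R\sqrt{d\iota/N})$ is an arithmetic slip. (One could instead absorb this $k$-dependent piece into the $4\sqrt{(k-1)\epiLen d\iota}C_\alpha(\sigmax+R)/\sqrt{\lambda_k}$ term of $\beta_k$, but that is not what you stated, and then the $2R\sqrt{d\iota/N}$ summand of $\beta_k$ would be unused.)

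The paper avoids the $\sqrt{k}$ loss by \emph{not} passing to the crude $\ell_2$ norm. The key step you are missing is the matrix comparison
\[
N_0\Lambda_k \;=\; N_0(\lambda_k I+E_k)+W_k \;\ge\; \tfrac{N_0}{2}\lambda_0 I + W_k,
\]
which follows from $\lambda_k\ge 2\norm{E_k}_2$ and $\lambda_k\ge\lambda_0$. By Lemma~\ref{lem:psd} this gives $\Lambda_k^{-1}\le N_0\bigl(W_k+\tfrac{N_0}{2}\lambda_0 I\bigr)^{-1}$, so one can control the noise in the \emph{genuinely self-normalized} norm: Theorem~1 of \citet{abbasi2011improved} yields $\bigl\|\sum_{t,i}x_i^t\eta_i^t\bigr\|_{(W_k+\lambda_0 N_0 I/2)^{-1}}\le R\sqrt{2d\iota}$ uniformly in $k$, and after pulling through the comparison the $R_3$ contribution is exactly $2R\sqrt{d\iota/N}$, independent of $k$. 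This is the step that makes the last summand of $\beta_k$ come out right and that your route misses.

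Minor point on probability bookkeeping: the $\delta/4$ privacy-noise event is already accounted for inside the $\delta/2$ of Lemma~\ref{lem:Eses_ex}, so your union bound double-counts it; the paper simply adds $\delta/4$ for the single self-normalized event to the $\delta/2$ of Lemma~\ref{lem:Eses_ex} to reach $3\delta/4$.
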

\begin{proof}[Proof of Lemma~\ref{lem:ae_dp}]
    For every $x\in\R^d$, we have
    \begin{align*}
    	x^\top (\theta_k-\theta^\ast) &= x^\top\Lambda_k^{-1}b_k-x^\top\Lambda_k^{-1}\Lambda_k\theta^\ast\\
    	&=x^\top\Lambda_k^{-1}\frac{s_k}{N_0}+x^\top\Lambda_k^{-1}e_k-x^\top\Lambda_k^{-1}\left(\lambda_k I+E_k+\frac{W_k}{N_0}\right)\theta^\ast\\
    	&=-x^\top\Lambda_k^{-1}\left(\lambda_k I+E_k\right)\theta^\ast+x^\top\Lambda_k^{-1}e_k+\frac{x^\top\Lambda_k^{-1}}{N_0}\left(s_k-W_k\theta^\ast\right)\\
    	&\triangleq R_1+R_2+R_3.
    \end{align*}
    Then we bound the three terms separately. First, we choose
    $$\lambda_k = 2C_\alpha \left(\mbd \epiLen\sqrt{d} +\epsilonGM \right) + \max\{\lambda_0,\lambda_1\sqrt{k}\},$$ 
    where $\lambda_0 = \epiLen$ and $\lambda_1=8\sqrt{\epiLen\iota} C_\alpha\sigmax$. Then we can guarantee $\lambda_k\ge 2\|E_k\|_2$ according to Lemma~\ref{lem:Eses_ex}. Also we can assume $\Lambda_k$ is symmetric (otherwise symmetrize it as in Lemma~\ref{lem:gm_sym}). Furthermore, it is obviously positive definite since $\lambda_k\ge 2\|E_k\|_2$. Therefore $\norm{\cdot}_{\Lambda_k^{-1}}$ is well-defined. Then we have
    \begin{align*}
    	|R_1|=& \left|x^\top\Lambda_k^{-1}\left(\lambda_k I+E_k\right)\theta^\ast \right|\\
    	\le & \sqrt{\left| (\theta^\ast)^\top \left(\lambda_k I+E_k\right)^{\top} \Lambda_k^{-1} \left(\lambda_k I+E_k\right)\theta^\ast\right|} \norm{x}_{\Lambda_k^{-1}}   \\
    	\le & 3\sqrt{\lambda_kd}\|x\|_{\Lambda_k^{-1}}.
    \end{align*}
    Similarly, we can bound the second term
    \begin{align*}
    	|R_2|\le \frac{4 C_\alpha (\sigmax+R)\sqrt{(k-1)\epiLen d\iota } +C_\alpha \left(\mbd \epiLen +\epsilonGM \right) }{\sqrt{\lambda_k}} \|x\|_{\Lambda_k^{-1}}.
    \end{align*}
    To bound $R_3$, first note that
    \begin{align*}
        s_k-{W}_k\theta^\ast = \sum_{t=1}^{(k-1)\epiLen} \sum_{i\in\cN_0} x_i^t\left( r_i^t-(x_i^t)^\top\theta^\ast\right)
        = \sum_{t=1}^{(k-1)\epiLen} \sum_{i\in\cN_0} x_i^t \eta_i^t.
    \end{align*}
    By \citep[Theorem~1]{abbasi2011improved}, we know that with probability at least $1-\delta/4$, for every $k$,
    \begin{align*}
    	\left\|s_k-{W}_k\theta^\ast\right\|^2_{({W}_k+\lambda_0N_0I/2)^{-1}}\le 2R^2\left(d\log\left(1+2T/\lambda_0\right)-\log(4/\delta)\right)\le 2R^2d\iota.
    \end{align*}
    Note that since $\lambda_k\ge2\norm{E_k}_2$, we have
    \begin{align*}
        N_0 \Lambda_k - {W}_k 
        = N_0 (\lambda_k I+E_k)
        \ge \frac{N_0}{2} \lambda_k I
        \ge \frac{N_0}{2} \lambda_0 I.
    \end{align*}
    By Lemma~\ref{lem:psd}, we have
    \begin{align*}
        \left({W}_k+\frac{N_0}{2} \lambda_0 I\right)^{-1}\ge \frac{1}{N_0} \Lambda_k^{-1}. 
    \end{align*}
    Then we have
    \begin{align*}
    	|R_3|\le \frac{1}{N_0}\|x\|_{\Lambda_k^{-1}}\left\|s_k-W_k\theta^\ast\right\|_{\Lambda_k^{-1}}
    	\le \frac{1}{\sqrt{N_0}}\|x\|_{\Lambda_k^{-1}}	\left\|s_k-{W}_k\theta^\ast\right\|_{({W}_k+\lambda_0N_0I/2)^{-1}}
    	\le 2R\sqrt{\frac{d\iota}{N}} \|x\|_{\Lambda_k^{-1}},
    \end{align*}
    
    Thus we can show that for every $x\in\R^d$,
    \begin{align*}
        \left|x^\top (\theta_k-\theta^\ast) \right|\le |R_1|+|R_2|+|R_3| \le \beta_k \|x\|_{\Lambda_k^{-1}},
    \end{align*}
    where $\beta_k = 3\sqrt{\lambda_k d}+\frac{4 C_\alpha (\sigmax+R)\sqrt{(k-1)\epiLen d\iota } +C_\alpha \left(\mbd \epiLen +\epsilonGM \right) }{\sqrt{\lambda_k}}+2R\sqrt{\frac{d\iota}{N}}$.
\end{proof}

Now we are ready to prove Theorem~\ref{thm:regret_dp} which reduces to Theorem~\ref{thm:regret} when $B=\epsilonGM=0$.
\begin{proof}[Proof of Theorem~\ref{thm:regret_dp}]
    Let $x_{i,t}^\ast\triangleq\argmax_{x\in\cD_i^t} \langle x, \theta^\ast\rangle$ be the optimal action. According to the algorithm, 
    $x_i^t = \argmax_{x\in\cD_i^t} \langle x, \theta_k\rangle + \beta_k \|x\|_{\Lambda_k^{-1}}$. Therefore,
    \begin{align*}
    	\langle x_i^t,\theta_k\rangle +\beta_k \|x_i^t\|_{\Lambda_k^{-1}} \ge \langle x_{i,t}^\ast,\theta_k\rangle+\beta_k \| x_{i,t}^\ast\|_{\Lambda_k^{-1}}.
    \end{align*}
    By Lemma~\ref{lem:ae},
    \begin{align*}
    	\left|\langle x_i^t,\theta_k\rangle-\langle x_i^t,\theta^\ast\rangle\right|&\le \beta_k \|x_i^t\|_{\Lambda_k^{-1}},\\
    	\left|\langle x_{i,t}^\ast,\theta_k\rangle-\langle x_{i,t}^\ast,\theta^\ast\rangle\right|&\le \beta_k \|x_{i,t}^\ast\|_{\Lambda_k^{-1}}.
    \end{align*}
    Combining the above three inequalities, we obtain
    \begin{align*}
    	\langle x_{i,t}^\ast,\theta^\ast\rangle-\langle x_i^t,\theta^\ast\rangle \le 2\beta_k \| x_i^t\|_{\Lambda_k^{-1}}\le 2 \beta_{\max}  \| x_i^t\|_{\Lambda_k^{-1}},
    \end{align*}
    where we define $\beta_{\max}\triangleq \max_{k\in[K]}\beta_k$. Therefore the regret is bounded by
    \begin{align*}
    	R_T\le 2 \beta_{\max} \sum_{k=1}^{\numEpi} \sum_{t\in\cT_k}\sum_{i\in\cN_0^t} \| x_i^t\|_{\Lambda_k^{-1}}
    	\le 2 \beta_{\max} \sqrt{N T } \sqrt{\sum_{k=1}^{\numEpi} \sum_{t\in\cT_k} \sum_{i\in\cN_0^t}(x_i^t)^\top \Lambda_k^{-1} x_i^t}.
    \end{align*}
    Note that
    \begin{align*}
        &\quad\sum_{k=1}^{\numEpi} \sum_{t\in\cT_k}\sum_{i\in\cN_0^t} (x_i^t)^\top \Lambda_k^{-1} x_i^t \\
        &=  \sum_{k=1}^{\numEpi} \sum_{t\in\cT_k}\left[\frac{N_0^t}{N_0}\sum_{i\in\cN_0} (x_i^t)^\top \Lambda_k^{-1} x_i^t+N_0^t\epiLen  \left(\frac{1}{N_0^t\epiLen }\sum_{i\in\cN_0^t} (x_i^t)^\top \Lambda_k^{-1} x_i^t-\frac{1}{N_0\epiLen }\sum_{i\in\cN_0} (x_i^t)^\top \Lambda_k^{-1} x_i^t\right)\right]\\
        &\le 2 \sum_{k=1}^{\numEpi} \sum_{t\in\cT_k}\sum_{i\in\cN_0} (x_i^t)^\top \Lambda_k^{-1} x_i^t+ \sum_{k=1}^{\numEpi} \sum_{t\in\cT_k} \xi_t,
    \end{align*}
    where
    \begin{align*}
        \xi_t\triangleq& N_0^t  \left(\frac{1}{N_0^t }\sum_{i\in\cN_0^t} (x_i^t)^\top \Lambda_k^{-1} x_i^t-\frac{1}{N_0 }\sum_{i\in\cN_0} (x_i^t)^\top \Lambda_k^{-1} x_i^t\right)\\
        =&N_0^t  \left(\frac{1}{N_0^t }\sum_{i\in\cN_0^t} \tr\left(\Lambda_k^{-1}\left[x_i^t(x_i^t)^\top -\E[x_i^t(x_i^t)^\top] \right]\right)-\frac{1}{N_0 }\sum_{i\in\cN_0} \tr\left(\Lambda_k^{-1}\left[x_i^t(x_i^t)^\top -\E[x_i^t(x_i^t)^\top] \right]\right)\right)
    \end{align*}
    Note that by Assumption~\ref{assump:variance}, we have
    \begin{align*}
        \tr\left(\Lambda_k^{-1}\left[x_i^t(x_i^t)^\top -\E[x_i^t(x_i^t)^\top] \right]\right)\le \sigmax d \norm{\Lambda_k^{-1}}_2. 
    \end{align*}
    By our i.i.d. assumption on decision sets, it is straight forward to verify that $\xi_t$ is a $\left(3\sigmax^2 d^2 N\norm{\Lambda_k^{-1}}_2^2\right)$-subGaussian random variable. Then by Azuma’s inequality for martingales with subGaussian tails~\citep[Theorem~2]{Shamir2011AVO}
    , with probability at least $1-\delta/4$,
    \begin{align*}
       \abs{ \sum_{k=1}^{\numEpi} \sum_{t\in\cT_k} \xi_t}\le& \sigmax d\sqrt{6NL\sum_{k=1}^K \norm{\Lambda_k^{-1} }_2^2\log\frac{8}{\delta}}\le 6\sigmax d\sqrt{NL\iota}\sqrt{\sum_{k=1}^K \lambda_k^{-2}}\\\le& 6\sigmax d\sqrt{NL\iota}\cdot \frac{\sqrt{\iota}}{\lambda_1}\le d\sqrt{N\iota}\le Nd\sqrt{\iota}.
    \end{align*}
    Then we can obtain that
    \begin{align*}
    	R_T\le& 4 \beta_{\max} \sqrt{NT}\sqrt{\sum_{k=1}^{\numEpi} \sum_{t\in\cT_k}\sum_{i\in\cN_0} (x_i^t)^\top \Lambda_k^{-1} x_i^t +Nd\sqrt{\iota} }\\
    	=&4 \beta_{\max} \sqrt{NT}\sqrt{ \sum_{k=1}^{\numEpi} \sum_{t\in\cT_k}\sum_{i\in\cN_0} (x_i^t)^\top \Lambda_{k+1}^{-1} x_i^t +\sum_{k=1}^{\numEpi} \sum_{t\in\cT_k}\sum_{i\in\cN_0} (x_i^t)^\top (\Lambda_k^{-1}-\Lambda_{k+1}^{-1}) x_i^t +Nd\sqrt{\iota}}.
    \end{align*}
    
    Note that 
    \begin{align*}
    	\sum_{k=1}^{\numEpi} \sum_{t\in\cT_k}\sum_{i\in\cN_0} (x_i^t)^\top (\Lambda_k^{-1}-\Lambda_{k+1}^{-1}) x_i^t =& \sum_{k=1}^{\numEpi} \sum_{t\in\cT_k}\sum_{i\in\cN_0} \|\Lambda_k^{-1}-\Lambda_{k+1}^{-1}\|_2\\
    	\le &\sum_{k=1}^{\numEpi} \sum_{t\in\cT_k}\sum_{i\in\cN_0} \tr\left(\Lambda_k^{-1}-\Lambda_{k+1}^{-1}\right)\\
    	=&\epiLen N_0 \tr\left(\Lambda_1^{-1}-\Lambda_{\numEpi }^{-1}\right)
    	\le \frac{\epiLen N_0 d}{\lambda_0}.
    \end{align*}
    Also, by \citep[Lemma~11]{abbasi2011improved}, we can bound
    \begin{align*}
    	\sum_{k=1}^{\numEpi} \sum_{t\in\cT_k}\sum_{i\in\cN_0} (x_i^t)^\top \Lambda_{k+1}^{-1} x_i^t & \le N_0 \sum_{k=1}^{\numEpi} \sum_{t\in\cT_k}\sum_{i\in\cN_0} (x_i^t)^\top ({W}_{k+1}+\lambda_0N_0/2)^{-1} x_i^t\\
    	&\le 2N_0d\log\left(1+2T/\lambda_0\right)
    	\le 2N_0d\iota.
    \end{align*}
    Choosing $\lambda_0 = \epiLen$ and combining all the inequalities above, we get the regret bound with probability at least $1-\delta$, 
    \begin{align*}
        R_T\le& 32Rd\iota\sqrt{N T}+200Nd\sqrt{T \iota}
        \left(\sqrt{2C_\alpha(\mbd \epiLen \sqrt{d}+\epsilonGM)+\max\left\{\epiLen ,8C_\alpha \sigmax\sqrt{T\iota}\right\}}\right.\\
        &\left.+\frac{\sqrt{T\iota}C_\alpha (\sigmax+R) }{\sqrt{2C_\alpha(\mbd \epiLen \sqrt{d}+\epsilonGM)+\max\left\{\epiLen ,8C_\alpha \sigmax\sqrt{T\iota}\right\}}}\right).
    \end{align*}
\end{proof}

\subsection{Proof of Theorem~\ref{thm:regret_mom}}
\label{sec:app_mom}
If we use the geometric median of mean oracle, the estimate is $\theta_k=\Lambda_k^{-1}b_k$ where $\lambda_k$ and $b_k$ can be written as 
\begin{align*}
\Lambda_k =& \lambda_k I + \GM^{\epsilonGM}_{1\le i\le P}\left(\frac{1}{\abs{\cG_i}}\sum_{j\in\cG_i}(\Gram_j^k+\mnoise_j^k)\right)=\lambda_k I + \frac{W_k}{N_0} + E_k^{\alpha},\\ \quad b_k =&\GM^{\epsilonGM}_{1\le i\le P}\left(\frac{1}{\abs{\cG_i}}\sum_{j\in\cG_i}(\featSum_j^k+\vnoise_j^k)\right)=\frac{s_k}{N_0} + e_k^{\alpha},    
\end{align*}
where $E_k^\alpha$ and $e_k^\alpha$ are the error terms of using geometric median of mean instead of arithmetic mean:
\begin{align*}
    E_k^\alpha\triangleq& \GM^{\epsilonGM}_{1\le i\le P}\left(\frac{1}{\abs{\cG_i}}\sum_{j\in\cG_i}(\Gram_j^k+\mnoise_j^k)\right)-\frac{1}{N_0}\sum_{i\in\cN_0}\Gram_i^k,\\
    e_k^\alpha \triangleq& \GM^{\epsilonGM}_{1\le i\le P}\left(\frac{1}{\abs{\cG_i}}\sum_{j\in\cG_i}(\featSum_j^k+\vnoise_j^k)\right)-\frac{1}{N_0}\sum_{i\in\cN_0}\featSum_i^k.
\end{align*}
Similar to Lemma~\ref{lem:Eses_ex}, we can bound these two error terms in the following lemma:

\begin{lemma} \label{lem:Eses_ex_mom}
Using the same parameter choices as in Theorem \ref{thm:regret_mom}, with probability at least $1-\delta/2$, for all $k\in [K]$,
\begin{align*}
        \|E_k^\alpha\|_2&\le 
        64 \sigmax\sqrt{\alpha(k-1)\epiLen \iota } +4 \left(\mbd \epiLen\sqrt{d} +\epsilonGM \right),
        \\
        \|e_k^\alpha\|_2&\le 
        64 (\sigmax+R)\sqrt{\alpha(k-1)\epiLen d \iota } +4 \left(\mbd \epiLen+\epsilonGM \right).
\end{align*}
\end{lemma}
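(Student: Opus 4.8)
The plan is to repeat the argument of Lemma~\ref{lem:Eses_ex} essentially verbatim, with individual agents replaced by the $P=3N_1$ randomly formed groups, and to extract the extra $\sqrt\alpha$ factor from the fact that each noncorrupted group already averages roughly $1/(3\alpha)$ agents' contributions \emph{before} the (approximate) geometric median is taken. First I would observe that a group is spoiled only if it contains a corrupted agent, so a corrupted agent spoils at most one group; hence $P_1\le N_1$ and $\gamma=P_1/P\le 1/3$, giving $C_\gamma=\frac{2-2\gamma}{1-2\gamma}\le 4$. I would then apply Lemma~\ref{lem:concentration_gm} at the group level, with the aggregated quantities $z_i=\frac{1}{\abs{\cG_i}}\sum_{j\in\cG_i}(\Gram_j^k+\mnoise_j^k)$, corrupted set $\group_1$, and anchor $z_0=\frac{1}{N_0}\sum_{i\in\cN_0}\Gram_i^k$, so that $\norm{E_k^\alpha}_F=\norm{\hat z-z_0}_F\le C_\gamma\bigl(\frac{1}{P_0}\sum_{\cG_i\in\group_0}\norm{z_i-z_0}_F+\epsilonGM\bigr)$.

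Next, for each noncorrupted group I would insert $\E[x(x)^\top]$ and split $\norm{z_i-z_0}_F$ by the triangle inequality into (a) a within-group deviation $\norm{\frac{1}{\abs{\cG_i}}\sum_{j\in\cG_i}(\Gram_j^k-\E[\Gram_j^k])}_F$, (b) the privacy-noise part $\frac{1}{\abs{\cG_i}}\sum_{j\in\cG_i}\norm{\mnoise_j^k}_F\le \mbd\epiLen\sqrt d$, and (c) the global deviation $\norm{\frac{1}{N_0}\sum_{i\in\cN_0}(\Gram_i^k-\E[\Gram_i^k])}_F$. For (a), $\sum_{j\in\cG_i}(\Gram_j^k-\E[\Gram_j^k])$ is a sum of $\abs{\cG_i}(k-1)\epiLen$ martingale differences of Frobenius norm $\le\sigmax$ (Assumption~\ref{assump:variance}), so the matrix Azuma inequality \citep[Theorem~1.8]{hayes2005large} gives $\tilde\cO(\sigmax\sqrt{\abs{\cG_i}(k-1)\epiLen\,\iota})$ with high probability; dividing by $\abs{\cG_i}$ and using $\frac{1}{\abs{\cG_i}}\le \frac{1}{\lfloor 1/(3\alpha)\rfloor}=\cO(\alpha)$ (valid for $\alpha\le 1/4$) turns this into $\tilde\cO(\sigmax\sqrt{\alpha(k-1)\epiLen\,\iota})$. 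Crucially, for (c) I would \emph{not} apply Jensen (which loses the gain) but apply the same Azuma bound to the single sum over all of $\cN_0$, obtaining $\cO(\sigmax\sqrt{(k-1)\epiLen\,\iota/N_0})$, and then use $1/N_0\le 2\alpha$ (which holds whenever $N_1\ge 1$ and $\alpha<1/2$) to land again at $\tilde\cO(\sigmax\sqrt{\alpha(k-1)\epiLen\,\iota})$. A union bound over the $\le P$ groups, over $\cN_0$, and over $k\in[K]$, with per-event failure probability $\delta/\mathrm{poly}(N,K)$, then yields the stated bound on $\norm{E_k^\alpha}_2\le\norm{E_k^\alpha}_F$ after multiplying by $C_\gamma\le 4$ and collecting constants (the absolute constant $64$ being a comfortable overestimate).

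For $\norm{e_k^\alpha}_2$ I would proceed identically, writing $\featSum_j^k-\E[\featSum_j^k]=(\Gram_j^k-\E[\Gram_j^k])\theta^\ast+\sum_t x_j^t\eta_j^t$ so that $\norm{\featSum_j^k-\E[\featSum_j^k]}_2\le\sqrt d\,\norm{\Gram_j^k-\E[\Gram_j^k]}_F+\norm{\sum_t x_j^t\eta_j^t}_2$ (using $\norm{\theta^\ast}_2\le\sqrt d$); the first piece is controlled exactly as above, and for $\norm{\frac{1}{\abs{\cG_i}}\sum_{j\in\cG_i}\sum_t x_j^t\eta_j^t}_2$ I would invoke the self-normalized martingale bound \citep[Theorem~1]{abbasi2011improved} on the within-group sum, together with the elementary inequality $\norm{y}_2^2\le 2m\norm{y}^2_{(mI+\sum xx^\top)^{-1}}$ already used in Lemma~\ref{lem:Eses_ex}, getting $\tilde\cO(R\sqrt{\abs{\cG_i}(k-1)\epiLen d\,\iota})$ and dividing by $\abs{\cG_i}=\Omega(1/\alpha)$; the global term over $\cN_0$ is treated the same way. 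The privacy-noise contribution is $\frac{1}{\abs{\cG_i}}\sum_{j\in\cG_i}\norm{\vnoise_j^k}_2\le\mbd\epiLen$, matching the $4(\mbd\epiLen+\epsilonGM)$ term after the $C_\gamma$ factor.

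The step I expect to be the main obstacle is making the $\sqrt\alpha$ improvement genuine for \emph{both} the within-group term (a) and the global term (c): one must resist bounding term-by-term via Jensen and instead apply a single matrix (resp.\ vector) martingale concentration to each whole sum, and one must carefully verify $1/\abs{\cG_i}=\cO(\alpha)$ and $1/N_0=\cO(\alpha)$ — the latter genuinely requiring $N_1\ge 1$, i.e.\ precisely the nontrivial-corruption regime in which this oracle is deployed (the case $\alpha=0$ just being the arithmetic mean) — including the edge range $1/6<\alpha\le 1/4$ where $\lfloor 1/(3\alpha)\rfloor$ is a small constant.
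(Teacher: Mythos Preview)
Your proposal is correct and follows essentially the same route as the paper: apply Lemma~\ref{lem:concentration_gm} at the group level with $C_\gamma\le 4$, split each noncorrupted group's deviation into the within-group martingale term, the privacy-noise term, and the global $\cN_0$-average term, then invoke the Hayes matrix Azuma bound on the full within-group sum and on the full $\cN_0$ sum (not termwise via Jensen) so that dividing by $\abs{\cG_i}\ge \lfloor 1/(3\alpha)\rfloor$ and by $N_0$ respectively yields the $\sqrt{\alpha}$ factor; the $e_k^\alpha$ part is handled identically after splitting off $\sum x\eta$ and using the self-normalized bound of \citet{abbasi2011improved}. Your explicit remark that the global term must be controlled by a single concentration bound on $\cN_0$ (and that $1/N_0\le 2\alpha$ requires $N_1\ge 1$) is exactly the point the paper uses implicitly when it bounds the global contribution by the within-group one via $1/\sqrt{N_0}\le 1/\sqrt{\abs{\cG_i}}$.
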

\begin{proof}[Proof of Lemma~\ref{lem:Eses_ex_mom}]
    We first bound $\|E_k^\alpha\|_2$. We have with probability at least $1-\delta/4$,
    \begin{align*}
     \norm{E_k^\alpha}_F =\,    &\left\|{\GM^{\epsilonGM}_{1\le i\le P}\left(\frac{1}{\abs{\cG_i}}\sum_{j\in\cG_i}(\Gram_j^k+\mnoise_j^k)\right)-\frac{1}{N_0}\sum_{i\in\cN_0} V_i^k }\right\|_F\\
             \stackrel{(i)}{\le} \, & C_\gamma \left( \frac{1}{\groupnumber_0}\sum_{i\in\group_0}\left\| \frac{1}{\abs{\cG_i}}\sum_{j\in\cG_i}(\Gram_j^k+\mnoise_j^k)-\frac{1}{N_0}\sum_{i\in\cN_0} V_i^k
        \right\|_F +\epsilonGM \right)\\
        \stackrel{(ii)}{\le}\, & C_\gamma \left( \left( \frac{1}{\groupnumber_0}\sum_{i\in\group_0}\left\| \frac{1}{\abs{\cG_i}}\sum_{j\in\cG_i}\Gram_j^k-\E[V_j^k]
        \right\|_F 
        \right) + \left\|\frac{1}{N_0}\sum_{i\in\cN_0} V_i^k -\E[V_i^k]\right\|_F+BL\sqrt{d}+\epsilonGM \right)\\
        =\,&
          C_\gamma\left( \frac{1}{\groupnumber_0}\sum_{i\in\group_0}\left\| \frac{1}{\abs{\cG_i}}\sum_{j\in\cG_i} \sum_{t=1}^{(k-1)\epiLen} \left(x_j^t(x_j^t)^\top -\E[x_j^t(x_j^t)^\top]\right)
        \right\|_F 
        \right) \\&+ \left\|\frac{C_\gamma}{N_0}\sum_{i\in\cN_0} \sum_{t=1}^{(k-1)\epiLen} \left(x_i^t(x_i^t)^\top -\E[x_i^t(x_i^t)^\top]\right)\right\|_F+C_\gamma\left(BL\sqrt{d}+\epsilonGM \right)
    \end{align*}
    where $(i)$ is due to Lemma~\ref{lem:concentration_gm}, $(ii)$ is by triangle inequality and the fact that $\norm{\mnoise_j^t}_F\le \sqrt{d}\norm{\mnoise_j^t}_2\le \mbd \epiLen\sqrt{d} $ with probability at least $1-\delta/4$.
    
    Then by \citep[Theorem~1.8]{hayes2005large}, with probability at least $1-\delta_{i,k}^{(1)}$ for fixed $i\in\cP_0$ and $k\in [K]$,
    \begin{align*}
        \left\|\sum_{j\in\cG_i} \sum_{t=1}^{(k-1)\epiLen} \left(x_j^t(x_j^t)^\top -\E[x_j^t(x_j^t)^\top]\right)\right\|_F^2 \le 2(k-1)\epiLen\sigmax^2\abs{\cG_i}\log\left(\frac{2e^2}{\delta_{i,k}^{(1)} }\right).
    \end{align*}
    Similarly, with probability at least $1-\delta_{k}^{(1)}$, 
    \begin{align*}
        \left\|\sum_{i\in\cN_0} \sum_{t=1}^{(k-1)\epiLen} \left(x_i^t(x_i^t)^\top -\E[x_i^t(x_i^t)^\top]\right)\right\|_F^2\le 2(k-1)\epiLen\sigmax^2N_0\log\left(\frac{2e^2}{\delta_{k}^{(1)} }\right).
    \end{align*}
    Choosing $\delta_{i,k}^{(1)}=\delta_{k}^{(1)}=\frac{\delta}{8(\groupnumber_0+1)\numEpi }$ and by union bound, we know with probability at least $1-\delta/8$, the above two inequalities hold for every $i\in\cN_0$ and $k\in [\numEpi ]$. Therefore we have
    \begin{align*}
        \norm{E_k^\alpha}_2\le \|E_k^\alpha\|_F \le& \frac{2 C_\gamma}{\sqrt{\abs{\cG_i}}} \sigmax\sqrt{2(k-1)\epiLen \iota } +C_\gamma \left(\mbd \epiLen\sqrt{d} +\epsilonGM \right)\\
        \le&64\sigmax\sqrt{\alpha(k-1)\epiLen \iota }+4\left(\mbd \epiLen\sqrt{d} +\epsilonGM \right),
    \end{align*}
    where the last inequality is due to $C_\gamma\le 4$ and $$\frac{1}{\sqrt{\abs{\cG_i}}}\le \sqrt{\frac{1}{1/(3\alpha)-1}}\le 4\sqrt{\alpha}.$$
    Now let us bound $\|e_k^\alpha\|_2$. We can similarly obtain that 
    \begin{align}
    \label{eq:ek_bd}
        \norm{e_k^\alpha}_2=&\left\|\GM^{\epsilonGM}_{1\le i\le P}\left(\frac{1}{\abs{\cG_i}}\sum_{j\in\cG_i}(\featSum_j^k+\vnoise_j^k)\right)-\frac{1}{N_0}\sum_{i\in\cN_0}\featSum_i^k\right\|_2\nonumber\\
        \le& C_\gamma\left( \frac{1}{\groupnumber_0}\sum_{i\in\group_0}\left\| \frac{1}{\abs{\cG_i}}\sum_{j\in\cG_i} \sum_{t=1}^{(k-1)\epiLen} \left(x_j^t r_j^t -\E[x_j^t r_j^t]\right)
        \right\|_F 
        \right)\nonumber \\&+ \left\|\frac{C_\gamma}{N_0}\sum_{i\in\cN_0} \sum_{t=1}^{(k-1)\epiLen} \left(x_i^t r_j^t -\E[x_i^t r_j^t]\right)\right\|_F+C_\gamma\left(BL\sqrt{d}+\epsilonGM \right).
    \end{align}
    Similar to the proof of Lemma~\ref{lem:Eses_ex}, we can bound that
    \begin{align*}
        \left\| \sum_{j\in\cG_i}\sum_{t=1}^{(k-1)\epiLen} \left(x_j^t r_j^t -\E[x_j^t r_j^t]\right)\right\|_2 &= \left\| \sum_{j\in\cG_i}\sum_{t=1}^{(k-1)\epiLen} \left(x_j^t (x_j^t)^\top \theta^\ast + x_j^t\eta_j^t-\E[x_j^t (x_j^t)^\top] \theta^\ast\right)\right\|_2\\
        &\le \sqrt{d}\left\|\sum_{j\in\cG_i} \sum_{t=1}^{(k-1)\epiLen} \left(x_j^t(x_j^t)^\top -\E[x_j^t(x_j^t)^\top]\right)\right\|_F+ \left\|\sum_{j\in\cG_i} \sum_{t=1}^{(k-1)\epiLen} x_j^t\eta_j^t\right\|_2.
    \end{align*}
    We have already bounded the first term. For the second term, according to \citep[Theorem~1]{abbasi2011improved}, we have with probability $1-\frac{\delta}{8(\groupnumber_0+1)\numEpi }$ for fixed $i\in\cN_0$ and $k\in [K]$,
    \begin{align*}
        \left\|\sum_{j\in\cG_i} \sum_{t=1}^{(k-1)\epiLen} x_j^t\eta_j^t\right\|_2^2 \le& 2(k-1)\epiLen \abs{\cG_i} \left\|\sum_{j\in\cG_i} \sum_{t=1}^{(k-1)\epiLen} x_j^t\eta_j^t\right\|^2_{\left((k-1)\epiLen\abs{\cG_i}+\sum_{j\in\cG_i}\sum_{t=1}^{(k-1)\epiLen}  x_j^t(x_j^t)^\top \right)^{-1}}\\
        \le& 4(k-1)\abs{\cG_i}\epiLen R^2 d\log\left(\frac{2}{\delta_{i,k}^{(2)}}\right).
    \end{align*}
    Similarly, we can bound the second term of \eqref{eq:ek_bd} with probability $1-\frac{\delta}{8(\groupnumber_0+1)\numEpi }$. Combining these inequalities, we have
    \begin{align*}
        \|e_k^\alpha\|_2&\le 
        64 (\sigmax+R)\sqrt{\alpha(k-1)\epiLen d \iota } +4 \left(\mbd \epiLen+\epsilonGM \right).
    \end{align*}
    Also note that by union bound, the total probability of all failures we consider in this lemma is less than $\delta/2$. We complete the proof.
\end{proof}

With Lemma~\ref{lem:Eses_ex_mom}, we can bound the difference between $\theta_k$ and $\theta^\ast$ for \byucbdpm.
\begin{lemma}[Approximation error]  
\label{lem:ae_mom}
    Using the same parameter choices as in Theorem \ref{thm:regret_mom}, with probability at least $1-3\delta/4$, for all $x\in\R^d$ and $k\in [K]$, we have $
        \left|x^\top (\theta_k-\theta^\ast) \right| \le \beta_k \|x\|_{\Lambda_k^{-1}}$.
\end{lemma}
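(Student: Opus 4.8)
The plan is to follow the proof of Lemma~\ref{lem:ae_dp} essentially verbatim, with the geometric-median error terms $E_k,e_k$ replaced by the geometric-median-of-mean error terms $E_k^\alpha,e_k^\alpha$ and the bounds of Lemma~\ref{lem:Eses_ex_mom} used in place of those of Lemma~\ref{lem:Eses_ex}. For a fixed $x\in\R^d$, using $\Lambda_k=\lambda_k I+E_k^\alpha+W_k/N_0$ and $b_k=s_k/N_0+e_k^\alpha$, I would decompose
\[
    x^\top(\theta_k-\theta^\ast)=-x^\top\Lambda_k^{-1}(\lambda_k I+E_k^\alpha)\theta^\ast+x^\top\Lambda_k^{-1}e_k^\alpha+\tfrac{1}{N_0}x^\top\Lambda_k^{-1}(s_k-W_k\theta^\ast)\triangleq R_1+R_2+R_3,
\]
and bound the three terms separately.

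First I would check that the choice $\lambda_k=8(\mbd\epiLen\sqrt d+\epsilonGM)+\max\{\lambda_0,\lambda_1\sqrt k\}$ with $\lambda_0=\epiLen$ and $\lambda_1=128\sigmax\sqrt{\alpha\epiLen\iota}$ satisfies $\lambda_k\ge2\|E_k^\alpha\|_2$ on the event of Lemma~\ref{lem:Eses_ex_mom}: indeed $8(\mbd\epiLen\sqrt d+\epsilonGM)\ge2\cdot4(\mbd\epiLen\sqrt d+\epsilonGM)$ and $\lambda_1\sqrt k=128\sigmax\sqrt{\alpha k\epiLen\iota}\ge2\cdot64\sigmax\sqrt{\alpha(k-1)\epiLen\iota}$, so their sum dominates $2\|E_k^\alpha\|_2$. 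After symmetrizing $\Lambda_k$ via Lemma~\ref{lem:gm_sym} if necessary, this also gives $\Lambda_k\ge\tfrac{\lambda_k}{2}I>0$, so $\norm{\cdot}_{\Lambda_k^{-1}}$ is well defined and $\Lambda_k^{-1}\le\tfrac{2}{\lambda_k}I$. For $R_1$, Cauchy--Schwarz in the $\Lambda_k^{-1}$ inner product gives $|R_1|\le\|(\lambda_k I+E_k^\alpha)\theta^\ast\|_{\Lambda_k^{-1}}\|x\|_{\Lambda_k^{-1}}$, and since $\lambda_k I+E_k^\alpha\le\tfrac{3}{2}\lambda_k I$ one has $(\lambda_k I+E_k^\alpha)\Lambda_k^{-1}(\lambda_k I+E_k^\alpha)\le\tfrac{2}{\lambda_k}(\tfrac{3}{2}\lambda_k)^2 I=\tfrac{9}{2}\lambda_k I$, which together with $\|\theta^\ast\|_2\le\sqrt d$ yields $|R_1|\le3\sqrt{\lambda_k d}\|x\|_{\Lambda_k^{-1}}$. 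For $R_2$, Cauchy--Schwarz again together with $\Lambda_k^{-1}\le\tfrac{2}{\lambda_k}I$ and the bound on $\|e_k^\alpha\|_2$ from Lemma~\ref{lem:Eses_ex_mom} gives, exactly as in the proof of Lemma~\ref{lem:ae_dp}, $|R_2|\le\frac{64(\sigmax+R)\sqrt{\alpha(k-1)\epiLen d\iota}+4(\mbd\epiLen+\epsilonGM)}{\sqrt{\lambda_k}}\|x\|_{\Lambda_k^{-1}}$.

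The term $R_3$ does not involve the robust aggregation at all, so it is handled exactly as in Lemma~\ref{lem:ae_dp}: write $s_k-W_k\theta^\ast=\sum_{t\le(k-1)\epiLen}\sum_{i\in\cN_0}x_i^t\eta_i^t$, apply the self-normalized bound \citep[Theorem~1]{abbasi2011improved} to get $\|s_k-W_k\theta^\ast\|^2_{(W_k+\lambda_0N_0I/2)^{-1}}\le2R^2d\iota$ with probability at least $1-\delta/4$, use $\lambda_k\ge2\|E_k^\alpha\|_2$ to deduce $N_0\Lambda_k-W_k=N_0(\lambda_k I+E_k^\alpha)\ge\tfrac{N_0}{2}\lambda_0I$, invoke Lemma~\ref{lem:psd} to get $\tfrac{1}{N_0}\Lambda_k^{-1}\le(W_k+\tfrac{N_0}{2}\lambda_0I)^{-1}$, and conclude $|R_3|\le2R\sqrt{d\iota/N}\|x\|_{\Lambda_k^{-1}}$. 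Summing $|R_1|+|R_2|+|R_3|$ then reproduces exactly the $\beta_k$ stated in Theorem~\ref{thm:regret_mom}, and a union bound over the event of Lemma~\ref{lem:Eses_ex_mom} (probability $\ge1-\delta/2$) and the self-normalized event (probability $\ge1-\delta/4$) gives the claimed $1-3\delta/4$. I do not expect a genuine obstacle here, since the argument is structurally identical to that of Lemma~\ref{lem:ae_dp}; the only thing requiring care is constant bookkeeping — verifying that the slightly different $\lambda_k$ (with $8$ rather than $2C_\alpha$ in front of $\mbd\epiLen\sqrt d+\epsilonGM$) still dominates $2\|E_k^\alpha\|_2$ given the worse constants ($64$ and $C_\gamma\le4$) appearing in Lemma~\ref{lem:Eses_ex_mom}, and tracking the absolute constants through $R_1,R_2,R_3$ so that their sum matches the $\beta_k$ in the theorem statement.
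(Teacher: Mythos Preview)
Your proposal is correct and is precisely the approach the paper takes: the paper simply states that the proof of Lemma~\ref{lem:ae_mom} is identical to that of Lemma~\ref{lem:ae_dp} with the different parameter choices of Theorem~\ref{thm:regret_mom}, and your write-up faithfully (indeed, more carefully) carries this out, including the constant check that $\lambda_k\ge 2\|E_k^\alpha\|_2$ under the bounds of Lemma~\ref{lem:Eses_ex_mom}.
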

Lemma~\ref{lem:ae_mom} and its proof are the same as Lemma~\ref{lem:ae_dp} except that we choose different parameters $\lambda_k$ and $\beta_k$ here. Now we are ready to prove Theorem~\ref{thm:regret_mom}.
\begin{proof}[Proof of Theorem~\ref{thm:regret_mom}]
The proof is the same as that of Theorem~\ref{thm:regret_dp} except that we use different parameter choices and that we need to bound the following term more carefully:
\begin{align*}
    I_\alpha\triangleq \abs{\sum_{k=1}^K\sum_{t\in\cT_k}\xi_t}.
\end{align*}
First, when $\alpha=0$, by the definition of $\xi_t$, we know $\xi_t=0$ for every $t\in[T]$ and thus $I_0=0.$ When $\alpha>0$, by its definition, we must have $\alpha\ge 1/N$. Then following the proof of Theorem~\ref{thm:regret_dp}, we can bound that with probability at least $1-\delta/4$,
\begin{align*}
    I_\alpha\le 6\sigma d\sqrt{NL\iota}\cdot\frac{\sqrt{\iota}}{\lambda_1}\le d\sqrt{N\iota/\alpha}\le Nd\sqrt{\iota}.
\end{align*}
This bound is tight enough for the theorem. We can complete the proof following that of Theorem~\ref{thm:regret_dp}.
\end{proof}

\end{document}